\documentclass[11pt]{article}
 
\usepackage{bm}
\usepackage{tablefootnote}
\usepackage{array}
\newcolumntype{C}[1]{>{\centering\let\newline\\\arraybackslash\hspace{0pt}}m{#1}}
\usepackage{relsize}
\usepackage{lscape}
\usepackage[round,longnamesfirst]{natbib}
\usepackage{needspace}
\usepackage{afterpage}
\usepackage{placeins}
\usepackage[left=1.3in, right=1.3in]{geometry}
\usepackage{soul}
\usepackage{threeparttable}
\usepackage{enumerate}
\usepackage[shortlabels]{enumitem}
\setlist[enumerate]{(i)}
\raggedbottom
\usepackage[bottom]{footmisc}
\usepackage{arydshln,float}
\usepackage{titlesec}
\titleformat{\section}
  {\scshape\bfseries\centering}{\thesection.}{1em}{}
\titleformat{\subsection}
  {\bfseries}{\thesubsection.}{1em}{}
\titleformat{\subsubsection}
  {\bfseries}{\thesubsubsection.}{1em}{}

\usepackage{nameref}
\usepackage{indentfirst}
\usepackage{amsmath}
\usepackage{amsthm}
\usepackage{amssymb}
\usepackage{nccmath}
\usepackage{booktabs}
\usepackage{color}
\usepackage{fancyhdr}
\usepackage[parfill]{parskip}
\usepackage{graphicx}

\usepackage{hypernat}
\definecolor{lust}{rgb}{0.9, 0.13, 0.13}
\definecolor{magenta(dye)}{rgb}{0.79, 0.08, 0.48}
\usepackage[
  bookmarks=true, 
  bookmarksopen=true, 
  breaklinks=true, 
  colorlinks=true,
  linkcolor=magenta(dye),
  citecolor=magenta(dye), 
]{hyperref}

\usepackage{mathrsfs}
\usepackage{caption}
\usepackage{subcaption}


\newtheoremstyle{assumption}%
	{\baselineskip}
	{\topsep}
	{\normalfont}
	{\parindent}
	{\bfseries}
	{.}
	{.5em}
	{}
\theoremstyle{assumption}
\newtheorem{assumption}{Assumption}
\newtheoremstyle{axiom}%
	{\baselineskip}
	{\topsep}
	{\normalfont}
	{\parindent}
	{\bfseries}
	{.}
	{.5em}
	{}
\theoremstyle{axiom}

\newtheoremstyle{claim}%
	{\baselineskip}
	{\topsep}
	{\itshape}
	{\parindent}
	{\scshape}
	{:}
	{.5em}
	{}
\theoremstyle{claim}

\newtheoremstyle{corollary}%
	{\baselineskip}
	{\topsep}
	{\normalfont}
	{\parindent}
	{\bfseries}
	{:}
	{.5em}
	{}
\theoremstyle{corollary}

\newtheorem{define}{Definition}
\newtheoremstyle{define}%
	{\baselineskip}
	{\topsep}
	{\normalfont}
	{\parindent}
	{\bfseries}
	{.}
	{.5em}
	{}
\theoremstyle{define}

\newtheoremstyle{example}%
	{\baselineskip}
	{\topsep}
	{\normalfont}
	{\parindent}
	{\scshape}
	{:}
	{.5em}
	{}
\theoremstyle{example}

\newtheoremstyle{lemma}%
	{\baselineskip}
	{\topsep}
	{\itshape}
	{\parindent}
	{\bfseries}
	{.}
	{.5em}
	{}
\theoremstyle{lemma}
\newtheorem{lemma}{Lemma}

\newtheoremstyle{theorem}%
	{\baselineskip}
	{\topsep}
	{\itshape}
	{\parindent}
	{\scshape \bfseries}
	{.}
	{.5em}
	{}
\theoremstyle{theorem}
\newtheorem{theorem}{Theorem}

\newtheorem{proposition}{Proposition}

\makeatletter

\makeatother
\newcounter{parentnumber}


\newcommand{\argmin}{\operatornamewithlimits{argmin}}

\newcommand{\Var}{\mathrm{Var}}
\newcommand{\E}{\operatorname{\mathbb{E}}}
\newcommand{\pr}{\operatorname{Pr}}


\graphicspath{{./Figures/}}

\begin{document}

\title{{\normalfont Random Projection Estimation of Discrete-Choice Models
    with Large Choice Sets}\thanks{First draft: February 29, 2016. This draft: \today.   We thank  Hiroaki Kaido, Michael Leung, Sergio Montero, Harry Paarsch, Alejandro Robinson, Frank Wolak, and participants at the DATALEAD conference (Paris, November 2015) for helpful comments.}}
\author{Khai X. Chiong\thanks{USC Dornsife INET \& Department of Economics, University of Southern California. E-mail: \href{mailto:kchiong@usc.edu}{kchiong@usc.edu}} \and Matthew Shum\thanks{California Institute of Technology. E-mail: \href{mailto:mshum@caltech.edu}{mshum@caltech.edu}}}
\date{}

\maketitle

\begin{abstract}
We introduce {\em sparse random projection}, an important dimension-reduction tool from
machine learning, for the estimation of discrete-choice models with
high-dimensional choice sets. Initially,  high-dimensional data are
compressed into a lower-dimensional Euclidean space using random projections. Subsequently, estimation proceeds using cyclic monotonicity
moment inequalities implied by the multinomial choice model; the estimation
procedure is semi-parametric and does not require explicit distributional
assumptions to be made regarding the random utility errors. The random
projection procedure is justified via the Johnson-Lindenstrauss Lemma -- the pairwise distances between data points are preserved during data compression, which we exploit to show convergence of our estimator.  The estimator works well in simulations and in an application to a supermarket scanner
dataset.

{\bfseries Keywords:} {\em semiparametric discrete choice models, random
projection, machine learning, large choice sets, cyclic monotonicity, Johnson-Lindenstrauss Lemma}

{\bfseries JEL:} {\em C14, C25, C55}

\end{abstract}
\section{Introduction}

Estimation of discrete-choice models in which consumers face high-dimensional choice sets is computationally challenging.  In this paper, we propose a new estimator that is tractable for semiparametric multinomial models with very large choice sets. Our estimator utilizes {\em random projection}, a powerful dimensionality-reduction technique from the machine learning literature.   To our knowledge, this is the first use of random projection in the econometrics
literature on discrete-choice models.   Using random projection, we can feasibly estimate high-dimensional discrete-choice models without specifying particular distributions for the random utility errors -- our approach is semi-parametric. 
 
In random projection, vectors of high-dimensionality are replaced by random low-dimensional linear combinations of the components in the original vectors.   The Johnson-Lindenstrauss Lemma, the backbone of random projection techniques, justifies that with high probability, the high-dimensional vectors are embedded in a lower dimensional Euclidean space in the sense that pairwise distances and inner products among the projected-down lower-dimensional vectors are preserved.

Specifically, we are given a $d$-by-$l$ data matrix, where $d$ is the dimensionality of the choice sets. When $d$ is very large, we encounter computational problems that render estimation difficult: estimating semiparametric discrete-choice models is already challenging, but large choice sets exacerbate the computational challenges; moreover, in extreme cases, the choice sets may be so large that typical computers will not be able to hold the data in memory (RAM) all at once for computation and manipulation.\footnote{For example, \cite{ng2015opportunities} analyzes terabytes of scanner data that required an amount of RAM that was beyond the budget of most researchers.}

Using the idea of random projection, we propose first, in a data pre-processing step, pre-multiplying the large $d$-by-$l$ data matrix by
a $k$-by-$d$ (with $k<< d$) stochastic matrix, resulting in a smaller $k$-by-$l$ compressed data matrix that is more manageable. Subsequently, we estimate the discrete-choice model using the compressed data matrix, in place of the original high-dimensional dataset.  Specifically in the second step, we estimate the discrete-choice model without needing to specify the distribution of the random utility errors by using inequalities derived from {\em cyclic monotonicity}: -- a generalization of the notion of monotonicity for vector-valued functions which always holds for random-utility discrete-choice models; see (\cite{rockafellar1970convex}, \cite{chiong2016duality}.

A desirable and practical feature of our procedure is that the random projection matrix is sparse, so that generating and multiplying it with the large data matrix is computationally parsimonious. For instance, when the dimensionality of the choice set is $d=5,000$, the random projection matrix consists of roughly 99\% zeros, and indeed only 1\% of the data matrix is needed or sampled.


We show theoretically that the random projection estimator converges to the unprojected estimator, as $k$ grows large. We utilize results from the machine learning literature, which show that random projection enables embeddings of points from high-dimensional into low-dimensional Euclidean space with high probability, and hence we can consistently recover the original estimates from the compressed dataset. In the simulation, even with small and moderate $k$, we show that the noise introduced by random projection  is reasonably small.  In summary, $k$ controls the trade-off between using a small/tractable dataset for estimation, and error in estimation. 

As an application of our procedures, we estimate a model of soft drink choice in which households choose not only which soft drink product to purchase, but also the store that they shop at.   In the dataset, households can choose from over 3000 (store/soft drink product) combinations, and we use random projection to reduce the number of choices to 300, one-tenth of the original number.  




\subsection{Related Literature}

Difficulties in estimating multinomial choice models with very large choice sets were already considered in the earliest econometric papers on discrete-choice models (McFadden (\citeyear{mcfadden1974conditional}, \citeyear{mcfadden1978modelling})).   There, within the special multinomial logit case, McFadden discussed simulation approaches to estimation based on sampling the choices faced by consumers; subsequently, this ``sampled logit'' model was implemented in \cite{train1987demand}.  This sampling approach depends crucially on the multinomial logit assumption on the errors, and particularly on the independence of the errors across items in the large choice set.\footnote{See also \cite{davis2016segregated} and \cite{keane2012estimation} for other applications of sampled logit-type discrete choice models.  On a related note, \cite{gentzkow2016measuring} use a Poisson approximation to enable parallel computation of a multinomial logit model of legislators' choices among hundreds of thousands of phrases.}   

In contrast, the approach taken in this paper is semiparametric, as we avoid making specific parametric assumptions for the distribution of the errors.   Our closest antecedent is \cite{fox2007semiparametric}, who uses a maximum-score approach of Manski (\citeyear{manski1975maximum}, \citeyear{manski1985semiparametric}) to estimate semiparametric multinomial choice models with large choice sets but using only a subset of the choices.\footnote{\cite{fox2013measuring} use this estimator for a model of the FCC spectrum auctions, and also point out another reason whereby choice sets may be high-dimensionality: specifically, when choice sets of consumers consist of {\em bundles} of products. The size of this combinatorial choice set is necessarily exponentially increasing in the number of products. Even though the vectors of observed market shares will be sparse, with many zeros, as long as a particular bundle does not have zero market share across all markets, it will still contain identifying information.
} Identification  relies on a ``rank-order'' assumption, which  is an
implication of the Independence of Irrelevant Alternatives (IIA) property,
and hence can be considered as a generalized version of IIA.  It is
satisfied by exchangeability of the joint error
distribution. 

In contrast, our cyclic monotonicity approach allows for non-exchangeable
joint error distribution with arbitrary correlation between the
choice-specific error terms, but requires full independence of errors with
the observed covariates.\footnote{Besides \cite{fox2007semiparametric},
  the literature on semiparametric multinomial choice models is quite
  small, and includes the multiple-index approach of
  \cite{ichimura1991semiparametric} and \cite{lee1995semiparametric}, and
  a pairwise-differencing approach in \cite{powell2008simple}.   These
  approaches do not appear to scale up easily when choice sets are large,
  and also are not amenable to dimension-reduction using random
  projection.}  Particularly, our approach accommodates models with error
structures in the generalized extreme value family (ie. nested logit
models; which are typically non-exchangeable distributions), and we illustrate this in our empirical application below, where we consider a model of joint store and brand choice in which a nested-logit (generalized extreme value) model would typically be used.

Indeed, Fox's rank-order property and the cyclic monotonicity property used here represent two different (and non-nested) generalizations of Manski's (\citeyear{manski1975maximum}) maximum-score approach for semiparametric binary choice models to a multinomial setting.  The rank-order property restricts the dependence of the utility shocks across choices (exchangeability), while cyclic monotonicity restricts the dependence of the utility shocks across different markets (or choice scenarios).\footnote{\cite{haile2008empirical} refer to this independence of the utility shocks across choice scenarios as an ``invariance'' assumption, while \cite{goeree2005regular} call the rank-order property a ``monotonicity'' or ``responsiveness'' condition.}

The ideas of random projection were popularized in the Machine Learning literature on dimensionality reduction (\cite{vempala2000random,achlioptas2003database,dasgupta2003elementary}).  As these papers point out, both by mathematical derivations and computational simulations, random projection allows computationally simple and  low-distortion embeddings of points from high-dimensional into low-dimensional Euclidean space.  However, the random projection approach will not work with all high dimensional models.   The reason is that while the reduced-dimension vectors maintain the same length as the original vectors, the individual components of these lower-dimension matrices may have little relation to the components of the original vectors.   Thus, models in which the components of the vectors are important would not work with random projection.   

In many high-dimensional econometric models, however, only the lengths and inner products among the data vectors are important-- this includes least-squares regression models with a fixed number of regressors but a large number of observations and, as we will see here, aggregate (market-level) multinomial choice models where consumers in each market face a large number of choices.   But it will {\em not} work in, for instance, least squares regression models in which the number of observations are modest but the number of regressors is large -- such models call for regressor selection or reduction techniques, including LASSO or principal components.\footnote{See \cite{belloni2012sparse}, \cite{belloni2014high}, and \cite{gillen2015blp}.   Neither LASSO nor principal components do not maintain lengths and inner products of the data vectors; typically, they will result in reduced-dimension vectors with length strictly smaller than the original vectors.}

Section 2 presents our semiparametric discrete-choice modeling framework, and the moment inequalities derived from cyclic monotonicity which we will use for estimation.   In section 3, we introduce random projection and show how it can be applied to the semiparametric discrete-choice context to overcome the computational difficulties with large choice sets.   We also show formally that the random-projection version of our estimator converges to the full-sample estimator as the dimension of the projection increases.   Section 4 contains results from simulation examples, demonstrating that random projection works well in practice, even when choice sets are only moderately large.  In section 5, we estimate a model of households' joint decisions of store and brand choice, using store-level scanner data.   Section 6 concludes.

\section{Modeling Framework}
We consider a semiparametric multinomial choice framework in which the choice-specific utilities are assumed to take a single index form, but the distribution of utility shocks is unspecified and treated as a nuisance element.\footnote{
Virtually all the existing papers on semiparametric multinomial choices use similar setups (\cite{fox2007semiparametric}, \cite{ichimura1991semiparametric}, \cite{lee1995semiparametric}, \cite{powell2008simple}).
}   Specifically, an agent chooses from among $\mathcal{C}=[1,\dots, d]$ alternatives or
choices. High-dimensionality here refers to a large value of $d$. The
utility that the agent derives from choice $j$ is $\bm{X}_{j}\bm{\beta} +
\epsilon_{j}$, where $\bm{\beta}=(\beta_{1},\dots,\beta_{b})' \in
\mathbb{R}^{b}$ are unknown parameters, and $\bm{X}_{j}$ is a $1 \times
b$ vector of covariates specific to choice $j$. Here, $\epsilon_j$ is a utility shock, encompassing unobservables which affect the agent's utility from the $j$-th choice.

Let $u_{j}\equiv \bm{X}_j\bm{\beta}$ denote the deterministic part of utility that the agent derives from choice $j$, and let ${\bm u} = (u_{j})_{j=1}^{d}$, which we assume to lie in the set $\mathcal{U} \subseteq \mathbb{R}^{d}$. For a given ${\bm u} \in \mathcal{U}$,  the probability that the agent chooses $j$ is 
$p_{j}({\bm u}) = \Pr( u_{j} + \epsilon_{j} \geq \max_{k \neq j}\{u_{k}+ \epsilon_{k}\})$. Denote the vector of choice probabilities as ${\bm p}({\bm u}) = (p_{j}({\bm u}) )_{j=1}^{d}$. Now observe that the choice probabilities vector ${\bm p}$ is a vector-valued function such that  ${\bm p} : \mathcal{U} \rightarrow \mathbb{R}^{d}$. 

In this paper, we assume that the utility shocks $\bm{\epsilon}\equiv (\epsilon_1,\ldots, \epsilon_d)'$
are distributed independently of $\bm{X}\equiv(\bm{X}_1,\ldots, \bm{X}_d)$, but otherwise allow it to follow an unknown joint distribution that can be
arbitrarily correlated among different choices $j$.   This leads to the following proposition:

\begin{proposition}\label{prop:cm}
Let $\bm{\epsilon}$ be independent of $\bm{X}$.   Then the choice probability function ${\bm p} : \mathcal{U} \rightarrow \mathbb{R}^{d}$ satisfies  {\bf cyclic monotonicity}.
\end{proposition}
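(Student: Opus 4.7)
The plan is to exhibit $\bm{p}$ as the gradient of a convex function and then invoke Rockafellar's classical characterization, which says that the subdifferential of a convex function is cyclically monotone (\cite{rockafellar1970convex}, Theorem 24.8). This is the standard route for this kind of result; the independence assumption is exactly what lets the construction go through.

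First I would introduce the ``social surplus'' function
\begin{equation*}
G(\bm{u}) \;=\; \E_{\bm{\epsilon}}\!\left[\max_{j \in \mathcal{C}} \bigl(u_{j} + \epsilon_{j}\bigr)\right], \qquad \bm{u} \in \mathcal{U}.
\end{equation*}
Because $\bm{\epsilon}$ is independent of $\bm{X}$ (and hence of $\bm{u}$), the distribution with respect to which the expectation is taken does not depend on $\bm{u}$. For each realization of $\bm{\epsilon}$, the mapping $\bm{u}\mapsto \max_{j}(u_{j}+\epsilon_{j})$ is the pointwise maximum of finitely many affine functions, hence convex. Taking expectation over $\bm{\epsilon}$ preserves convexity, so $G$ is a convex function on $\mathcal{U}$.

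Next I would apply the Williams--Daly--Zachary envelope argument to identify the (sub)gradient of $G$ with the choice probability vector. For almost every $\bm{u}$, the argmax in the inner maximum is unique with probability one, so by a standard exchange-of-differentiation-and-expectation (Danskin/envelope) step,
\begin{equation*}
\frac{\partial G(\bm{u})}{\partial u_{j}} \;=\; \Pr\!\Bigl(u_{j}+\epsilon_{j} \;\geq\; \max_{k\neq j}\bigl(u_{k}+\epsilon_{k}\bigr)\Bigr) \;=\; p_{j}(\bm{u}),
\end{equation*}
so $\nabla G(\bm{u}) = \bm{p}(\bm{u})$. More generally, $\bm{p}(\bm{u}) \in \partial G(\bm{u})$ without any smoothness assumption on the distribution of $\bm{\epsilon}$.

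Finally, since $\bm{p}$ is a selection from the subdifferential of the convex function $G$, Rockafellar's theorem gives cyclic monotonicity directly: for every finite cycle $\bm{u}^{1},\bm{u}^{2},\ldots,\bm{u}^{M},\bm{u}^{M+1}=\bm{u}^{1}$ in $\mathcal{U}$,
\begin{equation*}
\sum_{m=1}^{M} \bm{p}(\bm{u}^{m}) \cdot \bigl(\bm{u}^{m+1} - \bm{u}^{m}\bigr) \;\leq\; 0.
\end{equation*}
The only delicate point is the envelope step, where one must justify interchanging differentiation and expectation and handle ties in the argmax; this is routine provided $\bm{\epsilon}$ has a continuous joint distribution (or, more generally, provided ties occur with probability zero), and the subdifferential version of the argument sidesteps the issue entirely. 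The independence of $\bm{\epsilon}$ from $\bm{X}$ is used crucially in writing $G$ as a function of $\bm{u}$ alone, which is what makes $G$ convex in $\bm{u}$ and the envelope identity valid.
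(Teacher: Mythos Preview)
Your proposal is correct and follows essentially the same route as the paper: define the social surplus function $G(\bm{u})=\E[\max_j(u_j+\epsilon_j)]$, use independence to obtain convexity, identify $\bm{p}(\bm{u})$ with the (sub)gradient of $G$ via the Williams--Daly--Zachary argument, and then appeal to Rockafellar's result that subdifferentials of convex functions are cyclically monotone. The paper cites Theorem 23.5 rather than 24.8 of \cite{rockafellar1970convex}, but the substance is identical, and your remark that the subdifferential formulation handles ties without smoothness assumptions matches the paper's own footnote on this point.
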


\begin{define}[Cyclic Monotonicity]
Consider a function ${\bm p} : \mathcal{U} \rightarrow \mathbb{R}^{d}$, where $\mathcal{U} \subseteq \mathbb{R}^{d}$. Take a length $L$-cycle of points in $\mathcal{U}$, denoted as the sequence $({\bm u}^{1},{\bm u}^{2},\dots,{\bm u}^{L},{\bm u}^{1})$. The function $\bm{p}$ is cyclic monotone with respect to the cycle $({\bm u}^{1},{\bm u}^{2},\dots,{\bm u}^{L},{\bm u}^{1})$ if and only if 
\begin{align}\label{cm}
\sum_{l=1}^{L} (\bm{u}^{l+1} - \bm{u}^{l}) \cdot {\bm p}(\bm{u}^{l}) \leq 0
\end{align}
where $\bm{u}^{L+1} = \bm{u}^{1}$. The function $\bm{p}$ is cyclic monotone on $\mathcal{U}$ if and only if it is cyclic monotone with respect to all possible cycles of all lengths on its domain (see \cite{rockafellar1970convex}).\hfill$\blacksquare$
\end{define}

Proposition \ref{prop:cm} arises from the underlying convexity properties of the discrete-choice problem.  We refer to \citet*{chiong2016duality} and \cite{shi2016estimating} for the full details.  Briefly, the independence of $\bm{\epsilon}$ and $\bm{X}$ implies that the {\em social surplus function} of the discrete choice model, defined as,
$$
\mathcal{G}({\bm u}) = \mathbb{E} \left[\max_{j\in\left\{1,\ldots, d\right\}} \left(u_j + \epsilon_j \right)\right]
$$
is convex in ${\bm u}$.   Subsequently, for each vector of utilities ${\bm
  u} \in \mathcal{U}$, the corresponding vector of choice probabilities
${\bm p}({\bm u})$, lies in the subgradient of $\mathcal{G}$ at ${\bm
  u}$;\footnote{See Theorem 1(i) in \cite{chiong2016duality}.  This is the
  Williams-Daly-Zachary Theorem (cf. \cite{mcfadden1981econometric}),
  generalized to the case when the social surplus function may be
  non-differentiable, corresponding to cases where the utility shocks
  $\bm{\epsilon}$ have bounded support or follow a discrete distribution.} that is:
\begin{equation}\label{subgradient}
{\bm p}({\bm u}) \in \partial \mathcal{G}({\bm u}).
\end{equation}

By a fundamental result in convex analysis (\cite{rockafellar1970convex}, Theorem 23.5), the subgradient of a convex function satisfies cyclic monotonicity, and hence satisfies the CM-inequalities in (\ref{cm}) above. (In fact, any function that satisfies cyclic monotonicity must be a subgradient of some convex function.) Therefore, cyclic monotonicity is the appropriate vector generalization of the fact that the slope of a scalar-valued convex function is monotone increasing. 

\subsection{Inequalities for Estimation}
Following \cite{shi2016estimating}, we use the cyclic monotonic inequalities in (\ref{cm}) to estimate the parameters $\bm{\beta}$.\footnote{See also \cite{melo2015testing} for an application of cyclic monotonicity for testing game-theoretic models of stochastic choice.} Suppose we observe the aggregate behavior of many independent agents across $n$ different markets.  In this paper, we assume the researcher has access to such aggregate data, in which the market-level choice probabilities (or market shares) are directly observed.   Such data structures arise often in aggregate demand models in empirical industrial organization (eg. \cite{berry2014identification}, \cite{gandhi2013estimating}).

Our dataset consists of $\mathcal{D}=\left((\bm{X}^{(1)},\bm{p}^{(1)}),\dots,(\bm{X}^{(n)},\bm{p}^{(n)})\right)$, $\bm{p}^{(i)}$ denotes the $d \times 1$ vector of choice probabilities, or market shares, in market $i$, and $\bm{X}^{(i)}$ is the $d \times b$ matrix of covariates for market $i$ (where row $j$ of $\bm{X}^{(i)}$ corresponds to $\bm{X}_{j}^{(i)}$,  the vector of covariates specific to choice $j$ in market $i$).  Assuming that the distribution of the utility shock  vectors $\left( \bm{\epsilon}^{(1)},\ldots,\bm{\epsilon}^{(n)}\right)$ is i.i.d. across all markets, then by Proposition \ref{prop:cm}, the cyclic monotonicity inequalities (\ref{cm}) will be satisfied across all cycles in the data $\mathcal{D}$: that is,
\begin{align}\label{cm2}
\sum_{l=1}^{L} (\bm{X}^{(a_{l+1})}\bm{\beta} - \bm{X}^{(a_{l})}\bm{\beta}) \cdot {\bm p}^{(a_{l})} \leq 0,\quad\text{for all cycles } \left(a_{l}\right)_{l=1}^{L+1} \text{ in data } \mathcal{D}, L\geq 2
\end{align}

Recall that a cycle in data $\mathcal{D}$ is a sequence of distinct integers $\left(a_{l}\right)_{l=1}^{L+1}$, where $a_{L+1} = a_{1}$, and each integer is smaller than or equal $n$, the number of markets.

From the cyclic monotonicity inequalities in (\ref{cm2}), we define a criterion function which we will optimize to obtain an estimator of $\bm\beta$. This criterion function is the sum of squared violations of the cyclic monotonicity inequalities:

\begin{align}
Q(\bm{\beta}) = \sum_{\text{all cycles in data } \mathcal{D}; L\geq 2}\Bigg[\sum_{l=1}^{L} \left(\bm{X}^{(a_{l+1})}\bm{\beta} - \bm{X}^{(a_{l})}\bm{\beta}\right) \cdot {\bm p}^{(a_{l})} \Bigg]_{+}^2 \label{Q}
\end{align}
where $[x]_{+} = \max\{x,0\}$.  Our estimator is defined as
\begin{align*}
\hat{\bm{\beta}} = \argmin_{\bm{\beta} \in \mathbb{B} : ||\bm{\beta}|| =1} Q(\bm{\beta}).
\end{align*}

The parameter space $ \mathbb{B}$ is defined to be a convex subset of $\mathbb{R}^{b}$. The parameters are normalized such that the vector $\hat{\bm \beta}$ has a Euclidean length of 1. This is a standard normalization that is also used in the Maximum Rank Correlation estimator, for instance, in \cite{han1987non} and \cite{hausman1998misclassification}. \cite{shi2016estimating} shows that the criterion function above delivers consistent interval estimates of the identified set of parameters under the assumption that the covariates are exogenous. The criterion function here is convex, and the global minimum can be found using subgradient descent (since it is not differentiable everywhere).\footnote{Because the cyclic monotonicity inequalities involve differences in $\bm{X}\bm{\beta}$, no constant terms need be included in the model, as it would simply difference out across markets.   Similarly, any outside good with mean utility normalized to zero would also drop out of the cyclic monotonicity inequalities.}

The derivation of our estimation approach for discrete-choice models does not imply that all the choice probabilities be strictly positive -- that is, zero choice probabilities are allowed for.\footnote{Specifically, Eq. (\ref{subgradient}) allows some of the components of the choice probability vector $\bm{p}(\bm{u})$ to be zero.}   The possibility of zero choice probabilities is especially important and empirically relevant especially in a setting with large choice sets, as dataset with large choice sets (such as store-level scanner data) often have zero choice probabilities for many products (cf. \cite{gandhi2013estimating}).

For reasons discussed earlier, high-dimensional choice sets posed particular challenges for semi-parametric estimation. Next, we describe how random projection can help reduce the dimensionality of our problem.

\section{Random Projection}\label{rp section}

Our approach consists of two-steps: in the first data-preprocessing step, the data matrix
$\mathcal{D}$ is embedded into a lower-dimensional Euclidean space. This
dimensionality reduction is achieved by premultiplying $\mathcal{D}$ with
a {\em random projection matrix}, resulting in a compressed data matrix
$\tilde{\mathcal{D}}$ with a fewer number of rows, but the same number of
columns (that is, the number of markets and covariates is not reduced, but
the dimensionality of choice sets is reduced). In the second step, the
estimator outlined in Equation (\ref{Q}) is computed using only the
compressed data $\tilde{\mathcal{D}}$. 

A random projection matrix $R$, is a $k$-by-$d$ matrix (with $k<< d$) such that each entry $R_{i,j}$ is
distributed i.i.d according to $\frac{1}{\sqrt{k}}F$, where $F$ is any
mean zero distribution.  For any $d$-dimensional vectors $\bm{u}$
and $\bm{v}$, premultiplication by $R$ yields the random
reduced-dimensional ($k\times 1$) vectors $R \bm{u}$ and $R\bm{v}$; thus, $R\bm{u}$ and
$R\bm{v}$ are the random projections of $\bm{u}$ and $\bm{v}$, respectively.

By construction, a random projection matrix $R$ has the property
that, given two high-dimensional vectors $\bm{u}$ and $\bm{v}$, the squared
Euclidean distance between the two projected-down vectors $\|R \bm{u}-R\bm{v}\|^{2}$ is a random variable with mean equal to $\|\bm{u}-\bm{v}\|^{2}$, the squared distance between the two original high-dimensional vectors.  Essentially, the random projection procedure replaces each high-dimensional vector $\bm{u}$ with a random lower-dimensional counterpart $\bm{\tilde{u}}=R\bm{u}$ the length of which is a mean-preserving spread of the original vector's length.\footnote{For a detailed discussion, see Chapter 1 in \cite{vempala2000random}.}

Most early applications of random projection utilized Gaussian random
projection matrices, in which each entry of $R$ is generated independently
from standard Gaussian (normal) distributions.   However, for computational
convenience and simplicity, we focus in this paper on {\bf sparse} random projection
matrices, in which many elements will be equal to zero with high
probability.  Moreover, different choice of probability distributions of $R_{i,j}$ can lead to different variance and error tail bounds of $\|R \bm{u}-R\bm{v}\|^{2}$. Following the work  of \cite{li2006very}, we introduce a class of sparse random projection matrices that can also be tailored to enhance the efficiency of random projection.

\begin{define}[Sparse Random Projection Matrix]\label{sparserp}
A sparse random projection matrix is a $k$-by-$d$ matrix $R$ such that each $i,j$-th entry is independently and identically distributed according to the following discrete distribution:
\begin{align*}
R_{i,j} = \sqrt{s} \begin{cases}
+1 & \text{with probability }\frac{1}{2s}\\
0 & \text{with probability }1-\frac{1}{s}\\
-1 & \text{with probability }\frac{1}{2s} \end{cases}\quad (s> 1).
\end{align*}
\end{define}

By choosing a higher $s$, we produce sparser random projection matrices. \cite{li2006very} show that:

\begin{align}
\Var(\|R \bm{u}-R\bm{v}\|^{2}) = \frac{1}{k} \Big(2\|\bm{u}-\bm{v}\|^{4}+(s-3)\sum_{j=1}^{d}(u_{j} - v_{j})^{4} \Big) \label{varrp}
\end{align}

It appears from this variance formula that higher value of $s$ reduces the efficiency of random projections. It turns out that when $d$ is large, which is exactly the setting where random projection is needed,  the first term in the variance formula above dominates the second term. Therefore, we can set  large values of $s$ to achieve very sparse random projection, with negligible loss in efficiency.  More concretely, we can set $s$ to be as large as $\sqrt{d}$. We will see in the simulation example that when $d=5,000$, setting $s=\sqrt{d}$ implies that the random projection matrix is zero with probability 0.986 -- that is, only 1.4\% of the data are sampled on average. Yet we find that sparse random projection performs just as well as a dense random projection.\footnote{More precisely, as shown by \cite{li2006very}, is that if all fourth moments of the data to be projected-down are finite, i.e. $\E[u^{4}_{j}]<\infty$, $\E[v^{4}_{j}]<\infty$, $\E[u^{2}_{j}v^{2}_{j}]<\infty$, for all $j=1,\dots,d$, then the term $\|\bm{u}-\bm{v}\|^{4}$ in the variance formula (Eq. \ref{varrp}) dominates the second term $(s-3)\sum_{j=1}^{d}(u_{j} - v_{j})^{4}$ for large $d$ (which is precisely the setting we wish to use random projection).} 

Besides the sparse random projection ($s= \sqrt{d}$), we will also try $s=1$, where the minimum variance is achieved. We call this the {\bf optimal} random projection. If we let $s=3$, we obtain a variance of $\frac{1}{k}2\|\bm{u}-\bm{v}\|^{4}$, which interestingly, is the same variance achieved by the 
benchmark Gaussian random projection (each
element of the random projection matrix is distributed i.i.d. according to the standard Gaussian, see \cite{achlioptas2003database}). Since Gaussian random projection is dense and has the same efficiency as the sparse random projection with $s=3$, the class of random projections proposed in Definition \ref{sparserp} is preferred in terms of both efficiency and sparsity. Moreover,  random uniform numbers are much easier to generate than Gaussian random numbers.

\subsection{Random Projection Estimator}

We introduce the random projection estimator. Given the dataset $\mathcal{D}=\{(\bm{X}^{(1)},\bm{p}^{(1)}),$ $\dots,(\bm{X}^{(n)},\bm{p}^{(n)})\}$, define the {\em compressed} dataset by $\tilde{\mathcal{D}}_{k}=\{(\tilde{\bm{X}}^{(1)},\tilde{\bm{p}}^{(1)}),\dots,(\tilde{\bm{X}}^{(n)},\tilde{\bm{p}}^{(n)})\}$, where $(\tilde{\bm{X}}^{(i)},\tilde{\bm{p}}^{(i)}) = (R\bm{X}^{(i)},R\bm{p}^{(i)})$ for all markets $i$, and $R$ being a sparse $k \times d$  random projection matrix as in Definition \ref{sparserp}. 

\begin{define}[Random projection estimator]\label{rpestimator}
The random projection estimator is defined as $\tilde{\bm \beta}_{k} \in \argmin_{\bm \beta} Q(\bm{\beta},\tilde{\mathcal{D}}_{k})$, where $Q(\bm{\beta},\tilde{\mathcal{D}}_{k})$ is the criterion function in Equation (\ref{Q}) in which the input data is $\tilde{\mathcal{D}}_{k}$.\hfill$\blacksquare$
\end{define}

The compressed dataset $\tilde{\mathcal{D}}_{k}$ has $k$ number of rows, where the original dataset has a larger number of rows, $d$. Note that the identities of the markets and covariates (i.e. the columns of the data matrix) are unchanged in the reduced-dimension data matrix; as a result, the same compressed dataset can be used to estimate different utility/model specifications with varying combination of covariates and markets.

We will benchmark the random projection estimator with the estimator $\hat{{\bm \beta}} \in \argmin Q(\bm{\beta},\mathcal{D})$, where $Q(\bm{\beta},\mathcal{D})$ is the criterion function in Equation (\ref{Q}) in which the uncompressed data $\mathcal{D}$ is used as input. In the next section, we will prove convergence of the random projection estimator to the benchmark estimator using uncompressed data, as $k$ grows large. Here we provide some intuition and state some preliminary results for this convergence result.

  Recall from the previous section that the Euclidean distance between two vectors are preserved in expectation as these vectors are compressed into a lower-dimensional Euclidean space. In order to exploit this feature of random projection for our estimator, we rewrite the estimating inequalities -- based on cyclic monotonicity -- in terms of Euclidean norms. 
 
 \begin{define}[Cyclic Monotonicity in terms of Euclidean norms]\label{cm_euclid}
 Consider a function ${\bm p} : \mathcal{U} \rightarrow \mathbb{R}^{d}$, where $\mathcal{U} \subseteq \mathbb{R}^{d}$. Take a length $L$-cycle of points in $\mathcal{U}$, denoted as the sequence $({\bm u}^{1},{\bm u}^{2},\dots,{\bm u}^{L},{\bm u}^{1})$. The function $\bm{p}$ is cyclic monotone with respect to the cycle ${\bm u}^{1},{\bm u}^{2},\dots,{\bm u}^{L},{\bm u}^{1}$ if and only if 
 
 \begin{align}\label{cm_2}
 \sum_{l=2}^{L+1} \left(\|\bm{u}^{l} - \bm{p}^{l}\|^{2} -\| \bm{u}^{l} - {\bm p}^{l-1}\|^{2} \right)\leq 0
 \end{align}
where $\bm{u}_{L+1} = \bm{u}_{1}$, and $\bm{p}^l$ denotes $\bm{p}(\bm{u}^{l})$. The function $\bm{p}$ is cyclic monotone on $\mathcal{U}$ if and only if it is cyclic monotone with respect to all possible cycles of all lengths on its domain.\hfill$\blacksquare$
\end{define}

The inequalities (\ref{cm}) and (\ref{cm_2})  equivalently defined cyclic monotonicity, a proof is given in the appendix.  Therefore, from Definition \ref{cm_euclid}, we can rewrite the estimator in (\ref{Q}) as $\hat{\bm \beta} = \argmin_{\bm{\beta} \in  \mathbb{B}}Q(\bm{\beta})$ where the criterion function is defined as the sum of squared violations of the cyclic monotonicity inequalities:

\begin{align}
Q(\bm{\beta}) = \sum_{\text{all cycles in data } \mathcal{D}; L\geq 2}\Bigg[\sum_{m=2}^{L+1} \Big(\|\bm{X}^{(a_{l})}\bm{\beta} -{\bm p}^{(a_{l})}\|^{2} - \| \bm{X}^{(a_{l})}\bm{\beta} -{\bm p}^{(a_{l-1})} \|^{2}\Big)\Bigg]^2_{+}\label{QEuclidean}
\end{align}

To see the intuition behind the random projection estimator, we introduce the Johnson-Lindenstrauss Lemma. This lemma states that there exists a linear map (which can be found by drawing different random projection matrices) such that there is a low-distortion embedding.   There are different versions of this theorem; we state a typical one:

 \begin{lemma}[Johnson-Lindenstrauss]
 Let $\delta \in (0,\frac{1}{2})$. Let $\mathcal{U} \subset \mathbb{R}^{d}$ be a set of $C$ points, and $k = O(\log C/\delta^2)$. There exists a linear map ${\bm f} : \mathbb{R}^{d} \rightarrow \mathbb{R}^{k}$ such that for all ${\bm u},{\bm v} \in \mathcal{U}$:
 \begin{align*}
 (1-\delta) \|{\bm u}-{\bm v}\|^{2} \leq \|{\bm f} ({\bm u}) - {\bm f} ({\bm v}) \|^{2} \leq  (1+\delta) \|{\bm u}-{\bm v}\|^{2}.
 \end{align*}
 \end{lemma}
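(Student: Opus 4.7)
The plan is to prove this by the probabilistic method: I will exhibit a distribution over linear maps $f$ such that the desired bi-Lipschitz bounds hold simultaneously for every pair of points in $\mathcal{U}$ with strictly positive probability; existence of a deterministic good $f$ then follows. The natural construction is $f({\bm w}) = R{\bm w}$ for a $k \times d$ random matrix $R$ whose entries are i.i.d.\ mean-zero with variance $1/k$---the Gaussian projection, or any of the sparse matrices from Definition \ref{sparserp}, will do.

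The first step is the single-pair analysis. Fix ${\bm u},{\bm v}\in\mathcal{U}$ and set ${\bm w}={\bm u}-{\bm v}$. A direct computation using independence of the entries of $R$ yields $\mathbb{E}\|R{\bm w}\|^2 = \|{\bm w}\|^2$, so $\|R{\bm w}\|^2$ is an unbiased estimator of $\|{\bm w}\|^2$. The core step is then a concentration inequality of the form
\begin{equation*}
\Pr\bigl(\bigl|\|R{\bm w}\|^2 - \|{\bm w}\|^2\bigr| > \delta\|{\bm w}\|^2\bigr) \leq 2\exp(-c k \delta^2)
\end{equation*}
for an absolute constant $c>0$ and every $\delta \in (0,1/2)$. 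For Gaussian $R$ this is immediate because $k\|R{\bm w}\|^2/\|{\bm w}\|^2$ is chi-squared with $k$ degrees of freedom, for which standard tail bounds apply. For the sparse construction of Definition \ref{sparserp}, the same exponential bound can be derived via a Chernoff/moment-generating-function argument or a Bernstein-type inequality, exploiting that the summands $R_{ij}^2 w_j^2$ are bounded and have variance controlled by (\ref{varrp}).

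The second step is a union bound over the $\binom{C}{2}$ unordered pairs in $\mathcal{U}$:
\begin{equation*}
\Pr\bigl(\exists\,{\bm u},{\bm v}\in\mathcal{U}:\bigl|\|R{\bm u}-R{\bm v}\|^2 - \|{\bm u}-{\bm v}\|^2\bigr| > \delta\|{\bm u}-{\bm v}\|^2\bigr) \leq C(C-1)\exp(-c k \delta^2).
\end{equation*}
Choosing $k \geq c'\log C/\delta^2$ for a sufficiently large absolute constant $c'$ drives the right-hand side strictly below $1$, which is exactly the $k = O(\log C/\delta^2)$ scaling asserted in the lemma. The complementary event therefore has positive probability, so at least one realization of $R$ gives a linear map $f$ with the claimed bi-Lipschitz property for every pair.

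The main obstacle is the concentration step; the union bound and the probabilistic-method conclusion are routine once that is in hand. For the Gaussian projection the chi-squared tail is textbook, but for the sparse matrices in Definition \ref{sparserp} one must handle discrete, non-Gaussian summands directly: a Bernstein bound driven by the variance formula (\ref{varrp}), or a direct Chernoff calculation in the spirit of \cite{achlioptas2003database}, produces an exponent $c$ that is \emph{independent of $d$}, which is precisely what allows the embedding dimension $k$ to scale only with $\log C/\delta^2$ rather than with the ambient dimension.
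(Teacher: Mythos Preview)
Your proposal is correct and follows exactly the probabilistic-method route the paper itself points to: the paper does not supply its own proof of the Johnson--Lindenstrauss Lemma but refers the reader to \cite{dasgupta2003elementary,achlioptas2003database,vempala2000random} and summarizes the argument in one sentence, namely that ``the proof is probabilistic, and demonstrates that, with a non-zero probability, the choice of a random projection ${\bm f}$ satisfies the error bounds stated in the Lemma.'' Your sketch---unbiasedness of $\|R\bm{w}\|^2$, an exponential concentration bound for a single pair, then a union bound over the $\binom{C}{2}$ pairs to force the failure probability below $1$---is precisely the standard argument in those references, so there is nothing to contrast.
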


Proofs of the Johnson-Lindenstrauss Lemma can be found in, among others, \cite{dasgupta2003elementary,achlioptas2003database,vempala2000random}.   The proof is probabilistic, and demonstrates that, with a non-zero probability, the choice of a random projection ${\bm f}$ satisfies the error bounds stated in the Lemma. For this reason, the Johnson-Lindenstrauss Lemma has become a term that collectively represents random projection methods, even when the implication of the lemma is not directly used. 

As the statement of the Lemma makes clear, the reduced-dimension $k$ controls the trade-off between tractability and error in estimation. Notably, {\em these results do not depend on $d$}, the original dimension of the choice set (which is also the number of columns of $R$.)  Intuitively this is because the JL Lemma only requires that the lengths are maintained between the set of projected and unprojected vectors.   The definition of the random projection matrix (recall section~\ref{rp section} above) ensures that the length of each projected vector is an unbiased estimator of the length of the corresponding unprojected vector, regardless of $d$; hence, $d$ plays no direct role in satisfying the error bounds postulated in the JL Lemma.\footnote{However, $d$ does affect the variance of the length of the projected vectors, and hence affects the probabilities of achieving those bounds; see \cite{achlioptas2003database} for additional discussion.}

According to \cite{li2006very},``the JL lemma is conservative in many applications because it was derived based on Bonferroni correction for multiple comparisons."   That is, the magnitude for $k$ in the statement of the Lemma is a worst-case scenario, and larger than necessary in many applications.   This is seen in our computational simulations below, where we find that small values for $k$ still produce good results.

The feature that the cyclic monotonicity inequalities can be written in terms of Euclidean norms between vectors justifies the application of the Johnson-Lindenstrauss Lemma, and hence random projection, to our estimator, which is based on these inequalities.   In contrast, the ``rank-order'' inequalities, which underlie the maximum score approach to semiparametric multinomial choice estimation,\footnote{For instance,\cite{manski1985semiparametric}, \cite{fox2007semiparametric}.  The rank-order property makes pairwise comparisons of choices {\em within} a given choice set, and state that, for all $i,j\in \mathcal{C}$, $p_i(\bm{u})> p_j(\bm{u})$ iff $u_i>u_j$.} cannot be rewritten in terms in terms of Euclidean norms between data vectors, and hence random projection cannot be used for those inequalities.
 
%
%
%

%
%
%

\subsection{Convergence}\label{convsection}

In this section we show that, for any given data $\mathcal{D}$, the random
projection estimator computed using the compressed data
$\tilde{\mathcal{D}}_{k} = R \cdot \mathcal{D}$ converges in probability
to the corresponding estimator computed using the uncompressed data
$\mathcal{D}$, as $k$ grows large, where $k$ is the number of rows in the
random projection matrix $R$. We begin with simplest case where the
dimensionality of the original choice set $d$ is fixed, while the
reduced-dimension $k$ grows.\footnote{In Appendix \ref{add} we consider the case where $d$ grows with $k$.}

  In order to highlight the random projection aspect of our estimator, we assume that the market shares and other data variables are observed without error.  Hence, given the original (uncompressed) data $\mathcal{D}$, the criterion function $Q({\bm \beta},\mathcal{D})$ is deterministic, while the criterion function $Q({\bm \beta},\tilde{\mathcal{D}}_{k})$ is random solely due to the random projection procedure. 

All proofs for results in this section are provided in Appendix C.  We first show that the random-projected criterion function converges uniformly to the unprojected criterion function:

\begin{theorem}[Uniform convergence of criterion function]\label{ucon}
For any given dataset $\mathcal{D}$, we have $\sup_{{\bm \beta} \in \mathbb{B}} |Q({\bm \beta}, \tilde{\mathcal{D}}_{k}) - Q({\bm \beta},\mathcal{D})|  \xrightarrow[]{p} 0$, as $k$ grows. 
\end{theorem}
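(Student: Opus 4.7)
The plan is to exploit three structural facts that hold for the fixed dataset $\mathcal{D}$: (i) the outer sum in $Q$ runs over only finitely many cycles (bounded by a function of the fixed $n$), so it suffices to prove uniform convergence cycle-by-cycle and then sum; (ii) in the Euclidean-norm representation (\ref{QEuclidean}), each cycle's contribution is $[T_c(\bm\beta,R)]_+^2$, where $T_c(\bm\beta,R)$ is a polynomial of degree two in $\bm\beta$ whose coefficients are simple quadratic forms in the random matrix $R$ applied to the fixed data; and (iii) the parameter set $\mathbb{B}\cap\{\|\bm\beta\|=1\}$ is compact. With this structure in place, uniform convergence over $\bm\beta$ reduces to joint convergence in probability of a finite number of scalar random coefficients.

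First I would expand each term as $\|R\bm X^{(a)}\bm\beta - R\bm y\|^2 = \bm\beta^\top M^{(a)}\bm\beta - 2\bm\beta^\top v^{(a)}(\bm y) + \|R\bm y\|^2$, where $M^{(a)} = (R\bm X^{(a)})^\top(R\bm X^{(a)})$ and $v^{(a)}(\bm y) = (R\bm X^{(a)})^\top R\bm y$. The construction of $R$ gives $\E[(Ru)^\top(Rv)] = u^\top v$ for any fixed $u,v\in\mathbb{R}^d$, so each entry of $M^{(a)}$, each entry of $v^{(a)}(\bm y)$, and each scalar $\|R\bm y\|^2$ is unbiased for its uncompressed counterpart. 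Combining the variance bound (\ref{varrp}) with the polarization identity $(Ru)^\top(Rv) = \tfrac14(\|R(u+v)\|^2 - \|R(u-v)\|^2)$ shows these coefficients also have variance $O(1/k)$. Chebyshev's inequality together with a union bound over the finitely many coefficients yields their joint convergence in probability. The crude bound $\sup_{\|\bm\beta\|=1}|\bm\beta^\top(M-M_0)\bm\beta|\leq \|M-M_0\|_F$ (and its analogue for the linear part) then upgrades this to $\sup_{\bm\beta\in\mathbb{B}}|T_c(\bm\beta,R)-T_c(\bm\beta,\mathcal{D})|\xrightarrow{p}0$ for each cycle $c$. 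Because $x\mapsto [x]_+^2$ is locally Lipschitz and the deterministic limit $T_c(\bm\beta,\mathcal{D})$ is bounded on the compact set $\mathbb{B}$, uniform convergence passes through the squared-positive-part map; summing over the finite collection of cycles completes the proof.

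The only step that is not purely mechanical is the $O(1/k)$ variance bound for random inner products, since (\ref{varrp}) is stated only for squared distances. I would obtain it from the polarization identity above together with (\ref{varrp}) applied separately to $\|R(u+v)\|^2$ and $\|R(u-v)\|^2$, using the finite fourth-moment condition noted after (\ref{varrp}). Everything else---the reduction to finitely many cycles, the polynomial-in-$\bm\beta$ structure with Frobenius-norm control on the quadratic and linear coefficient matrices, the passage through $[\cdot]_+^2$, and the compactness of $\mathbb{B}$---is standard given the fixed-$\mathcal{D}$, fixed-$d$ setup of the theorem.
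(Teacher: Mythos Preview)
Your argument is correct, but it takes a genuinely different route from the paper's own proof.  The paper does \emph{not} expand $Q$ as a quadratic polynomial in $\bm\beta$ and track the finitely many random coefficients.  Instead it argues in two short steps: (a) for each fixed $\bm\beta$, apply Chebyshev directly to each squared distance $\|R\bm X^{(a)}\bm\beta - R\bm p^{(b)}\|^2$ (unbiased with variance $O(1/k)$ by (\ref{varrp})) and then the Continuous Mapping Theorem to get $Q(\bm\beta,\tilde{\mathcal D}_k)\xrightarrow{p}Q(\bm\beta,\mathcal D)$ \emph{pointwise} in $\bm\beta$; (b) show separately (Lemma~\ref{convexQ}) that $Q(\cdot,\mathcal D)$ is convex in $\bm\beta$ for any data, and invoke Pollard's Convexity Lemma to upgrade pointwise convergence of the convex random functions $Q(\cdot,\tilde{\mathcal D}_k)$ to uniform convergence on compacta.

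The trade-off is this: the paper's route is shorter and avoids the polarization-identity detour for inner products, but it requires proving convexity of $Q$ as a separate lemma and importing Pollard's result.  Your route is more self-contained and elementary---it never uses convexity of $Q$ at all, only the finite-dimensional polynomial structure in $\bm\beta$, the finiteness of the number of cycles (hence of random coefficients), and compactness of the parameter set---at the cost of a bit more bookkeeping.  Both need some form of boundedness of $\bm\beta$: you invoke compactness of $\mathbb B$ (or of $\mathbb B\cap\{\|\bm\beta\|=1\}$) directly, while the paper's Convexity Lemma delivers uniform convergence only over compact subsets of the open convex $\mathbb B$, so on that technical point the two proofs are on the same footing.
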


Essentially, from the defining features of the random projection matrix $R$, we can argue that $Q({\bm\beta} ,\tilde{\mathcal{D}}_{k})$ converges in probability to $Q({\bm\beta}, \mathcal{D})$, {\em pointwise} in $\bm\beta$.   Then, because $Q({\bm\beta},\mathcal{D})$ is convex in $\bm\beta$ (which we will also show), we can invoke the Convexity Lemma from \cite{pollard1991asymptotics}, which says that pointwise and uniform convergence are equivalent for convex random functions.

%

Finally, under the assumption that the deterministic criterion function $Q(\bm{\beta},\mathcal{D})$ (i.e. computed without random projection) admits an identified set, then the random projection estimator converges in a set-wise sense to the same identified set. Convergence of the set estimator here means convergence in the {\em Hausdorff} distance, where the Hausdorff distance is a distance measure between two sets is: $
d(X,Y) = \sup_{y \in Y} \inf_{x\in X} \|x - y \| + \sup_{x \in X} \inf_{y \in Y} \|x -y \|$.

\begin{assumption}[Existence of identified set $\Theta^{*}$] \label{exist}
For any given data $\mathcal{D}$, we assume that there exists a set $\Theta^{*}$ (that depends on $\mathcal{D}$) such that $\sup_{{\bm \beta} \in \Theta^{*}} Q({\bm \beta},\mathcal{D}) = \inf_{{\bm \beta} \in \Theta^{*}} Q({\bm \beta},\mathcal{D})$ and $\forall \nu > 0$, $\inf_{{\bm \beta} \notin B(\Theta^{*},\nu)} Q({\bm \beta},\mathcal{D}) > \sup_{{\bm \beta} \in \Theta^{*}} Q({\bm \beta},\mathcal{D})$, where $B(\Theta^{*},\nu)$ denotes a union of open balls of radius $\nu$ each centered on each element of $\Theta^{*}$.
\end{assumption}

\begin{theorem}\label{convergence}
Suppose that Assumption \ref{exist} hold. For any given data $\mathcal{D}$, the random projection estimator $\tilde{\Theta}_{k} =\argmin_{{\bm \beta} \in \mathbb{B}} Q({\bm \beta},\tilde{\mathcal{D}}_{k})$ converges in half-Hausdorff distance to the identified set $\Theta^{*}$ as $k$ grows, i.e. $\sup_{{\bm \beta} \in \tilde{\Theta}_{k}} \inf_{{\bm \beta}' \in \Theta^{*}} \|{\bm \beta}  - {\bm \beta}'\|  \xrightarrow[]{p} 0$ as $k$ grows.
\end{theorem}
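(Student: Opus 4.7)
The plan is to reduce the argument to a standard consistency-of-set-estimator argument that uses (a) the uniform convergence already established in Theorem \ref{ucon}, and (b) the well-separation built into Assumption \ref{exist}. Concretely, I will show that on the high-probability event on which the projected criterion is uniformly close to the unprojected one, every element of $\tilde{\Theta}_k$ must lie in an arbitrary $\nu$-neighborhood of $\Theta^*$.

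Fix $\nu>0$ and $\eta>0$, and let $m^{*}:=\inf_{\bm\beta\in\mathbb{B}}Q(\bm\beta,\mathcal{D})$. By Assumption \ref{exist}, $Q(\cdot,\mathcal{D})$ attains the common value $m^{*}$ on $\Theta^{*}$, and there exists $\epsilon>0$ such that
\begin{equation*}
\inf_{\bm\beta\notin B(\Theta^{*},\nu)} Q(\bm\beta,\mathcal{D}) \;\geq\; m^{*}+\epsilon .
\end{equation*}
Now invoke Theorem \ref{ucon}: for any $\delta>0$ we have $\Pr\bigl(\sup_{\bm\beta\in\mathbb{B}} |Q(\bm\beta,\tilde{\mathcal{D}}_{k})-Q(\bm\beta,\mathcal{D})|\leq \delta\bigr)\to 1$ as $k\to\infty$. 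Take $\delta=\epsilon/3$, and call $\mathcal{E}_{k}$ the event that this uniform bound holds; then $\Pr(\mathcal{E}_{k})\geq 1-\eta$ for all sufficiently large $k$.

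On $\mathcal{E}_{k}$ I argue as follows. First, for any fixed $\bm\beta^{*}\in\Theta^{*}$,
\begin{equation*}
\inf_{\bm\beta\in\mathbb{B}} Q(\bm\beta,\tilde{\mathcal{D}}_{k})\;\leq\; Q(\bm\beta^{*},\tilde{\mathcal{D}}_{k})\;\leq\; m^{*}+\tfrac{\epsilon}{3}.
\end{equation*}
Second, for any $\bm\beta\notin B(\Theta^{*},\nu)$,
\begin{equation*}
Q(\bm\beta,\tilde{\mathcal{D}}_{k})\;\geq\; Q(\bm\beta,\mathcal{D})-\tfrac{\epsilon}{3}\;\geq\; m^{*}+\tfrac{2\epsilon}{3}\;>\;\inf_{\bm\beta'\in\mathbb{B}}Q(\bm\beta',\tilde{\mathcal{D}}_{k}).
\end{equation*}
Hence no such $\bm\beta$ can belong to $\tilde{\Theta}_{k}$, so $\tilde{\Theta}_{k}\subseteq B(\Theta^{*},\nu)$, which is exactly the statement $\sup_{\bm\beta\in\tilde{\Theta}_{k}}\inf_{\bm\beta'\in\Theta^{*}}\|\bm\beta-\bm\beta'\|\leq\nu$. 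Since this holds on an event of probability at least $1-\eta$ for all $k$ large, the half-Hausdorff distance converges to zero in probability.

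The only step that is not purely mechanical is verifying that one can indeed pick the separation constant $\epsilon>0$ and then choose $\delta=\epsilon/3$ inside Theorem \ref{ucon}; this is where Assumption \ref{exist} does its work, because without the strict jump $\inf_{\bm\beta\notin B(\Theta^{*},\nu)}Q>\sup_{\Theta^{*}}Q$ uniform convergence would not be enough to rule out spurious minimizers of the projected objective far from $\Theta^{*}$. I do not expect a genuine obstacle: the convexity of $Q(\cdot,\mathcal{D})$ used in Theorem \ref{ucon} is not needed again here, and one does not need to control the reverse half-Hausdorff direction $\sup_{\Theta^{*}}\inf_{\tilde{\Theta}_{k}}$, which would have required a stronger (two-sided) containment argument.
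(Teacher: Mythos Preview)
Your argument is correct and is, in substance, the same as the paper's: both combine Theorem \ref{ucon} (uniform convergence of the random-projected criterion) with the well-separation condition in Assumption \ref{exist} to conclude half-Hausdorff consistency of the set estimator. The only difference is one of presentation. The paper does not write out the $\epsilon/3$ argument; it simply observes that the pair (uniform convergence, well-separated identified set) places the problem squarely within the framework of \cite{chernozhukov2007estimation}, with the random projection playing the role of the ``sampling'' randomness and $Q(\bm\beta,\mathcal{D})$ playing the role of the population limit, and then cites that paper for the conclusion (and for the extension to full Hausdorff convergence via an augmented estimator). Your version is more self-contained and makes explicit exactly where the strict separation in Assumption \ref{exist} is used; the paper's version buys brevity and a pointer to how one would upgrade to two-sided Hausdorff convergence.
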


In the Appendix \ref{add}, we analyse the setting where the dimensionality of the choice set, $d$, grows with $k$, with $d$ growing much faster than $k$. Specifically, we let $d = O(k^{2})$ and show that convergence still holds true under one mild assumption. This assumption says that for all $d$-dimensional vectors of covariates $\bm{X}$ in the data $\mathcal{D}$, the fourth moment $\frac{1}{d}\sum_{j}^{d} (X_{j})^{4}$ exists as $d$ grows.


\section{Simulation Examples}

In this section, we show simulation evidence that random projection performs well in practice.
In these simulations, the sole source of randomness is the random projection matrices. This allows us to starkly examine the noise introduced by random projections, and how the performance of random projections varies as we change $k$, the reduced dimensionality. Therefore the market shares and other data variables are assumed to be observed without error.  

The main conclusion from this section is that the error introduced by random projection is negligible, even when the reduced dimension $k$ is very small. In the tables below, we see that the random projection method produces interval estimates that are always strictly nested within the identified set which was obtained when the full uncompressed data are used. 


\subsection{Setup}

We consider projecting down from $d$ to $k$. Recall that $d$ is the number of choices in our context. There are $n=30$ markets. The utility that an agent in market $m$ receives from choice $j$ is $U^{(m)}_{j} = \beta_{1}X^{(m)}_{1,j} + \beta_{2}X^{(m)}_{2,j}$, where $X^{(m)}_{1,j}\sim N(1,1)$ and $X^{(m)}_{2,j}\sim N(-1,1)$ independently across all choices $j$ and markets $m$.\footnote{We also considered  two other sampling assumptions on the regressors, and found that the results are robust to: (i) strong brand effects: $X^{(m)}_{l,j}=X_{l,j}+\eta^{(m)}_{l,j},\ l=1,2$, where $X_{1,j} \sim N(1,0.5)$, $X_{2,j} \sim N(-1,0.5)$, and $\eta^{(m)}_{l,j} \sim N(0,1)$; (ii)  strong market effects: $X^{(m)}_{l,j}=X^{(m)}_{l}+\eta^{(m)}_{l,j},\ l=1,2$, where $X^{(m)}_{1} \sim N(1,0.5)$, $X^{(m)}_{2} \sim N(-1,0.5)$, and $\eta^{(m)}_{l,j} \sim N(0,1)$.}

 We normalize the parameters $\beta = (\beta_{1},\beta_{2})$ such that $\|\beta\|=1$. This is achieved by parameterizing $\beta$ using polar coordinates: $\beta_1=\cos\theta$ and $\beta_2=\sin\theta$, where  $\theta\in [0,2\pi]$. The true parameter is $\theta_0=0.75\pi = 2.3562$.  

To highlight a distinct advantage of our approach, we choose a distribution of the error term that is neither exchangeable nor belongs to the generalized extreme value family. Specifically, we let the additive error term be a MA(2) distribution where errors are serial correlated in errors across products. To summarize, the utility that agent in market $m$ derives from choice $j$ is $U^{(m)}_{j} + \epsilon^{(m)}_{j}$,  where $\epsilon^{(m)}_{j} = \frac{1}{3}\sum_{l=0}^{3} \eta^{(m)}_{j+l}$, and $\eta^{(m)}_{j}$ is distributed i.i.d with $N(0,1)$.

Using the above specification, we generate the data $\mathcal{D}=\{(\bm{X}^{(1)},\bm{p}^{(1)}),\dots,(\bm{X}^{(n)},\bm{p}^{(n)})\}$ for $n=30$ markets, where $\bm{p}^{(m)}$ corresponds to the $d$-by-1 vector of simulated choice probabilities for market $m$: the $j$-th row of $\bm{p}^{(m)}$ is  $\bm{p}^{(m)}_{j} =\pr\left(U^{(m)}_{j} + \epsilon^{(m)}_{j} > U^{(m)}_{-j} + \epsilon^{(m)}_{-j}\right)$. We then perform random projection on $\mathcal{D}$ to obtain the compressed dataset $\tilde{\mathcal{D}}=\{(\tilde{\bm{X}}^{(1)},\tilde{\bm{p}}^{(1)}),\dots,(\tilde{\bm{X}}^{(n)},\tilde{\bm{p}}^{(n)})\}$. Specifically, for all markets $m$, $(\tilde{\bm{X}}^{(m)},\tilde{\bm{p}}^{(m)}) = (R\bm{X}^{(i)},R\bm{p}^{(i)})$, where $R$ is a realized $k \times d$ random projection matrix as in Definition \ref{sparserp}. Having constructed the compressed dataset, the criterion function in Eq. \ref{Q} is used to estimate $\beta$. We restrict to cycles of length 2 and 3 in computing Eq. \ref{Q}; however, we find that even using cycles of length 2 did not change the result in any noticeable way.

The random projection matrix is parameterized by $s$ (see Definition \ref{sparserp}). We set $s=1$, which  corresponds to the optimal random projection matrix. In Table \ref{sparseiid}, we show that sparse random projections ($s=\sqrt{d}$ in Definition \ref{sparserp}) perform just as well. Sparse random projections are much faster to perform -- for instance when $d=5000$, we sample less than 2\% of the data, as over 98\% of the random projection matrix are zeros.

In these tables, the rows correspond to different designs where the dimension of the dataset is projected down from $d$ to $k$. For each design, we estimate the model using 100 independent realizations of the random projection matrix. We report the means of the upper and lower bounds of the estimates, as well as their standard deviations. We also report the interval spans by the 25th percentile of the lower bounds as well as the 75th percentile of the upper bounds. The last column reports the actual identified set that is computed without using random projections. (In the Appendix, Tables \ref{iidapp} and \ref{sparseapp}, we see that in all the runs,  our approach produces interval estimates that are strictly nested within the identified sets.) 

The results indicate that, in most cases, optimization of the randomly-projected criterion function $Q({\bm\beta}, {\mathcal{D}_{k}})$ yields a unique minimum, in contrast to the unprojected criterion function $Q({\bm\beta}, {\mathcal{D}})$, which is minimized at an interval.  For instance, in the fourth row of Table \ref{iidlarged} (when compressing from $d=5000$ to $k=100$), we see that the true identified set for this specification, computed using the unprojected data, is $[1.2038, 3.5914]$, but the projected criterion function is always uniquely minimized (across all 100 replications). Moreover the average point estimate for $\theta$ is equal to 2.3766, where the true value is 2.3562.   This is unsurprising, and occurs often in the moment inequality literature; the random projection procedure introduces noise into the projected inequalities so that, apparently, there are no values of the parameters $\bm\beta$ which jointly satisfy all the projected inequalities, leading to a unique minimizer for the projected criterion function.   

\begin{table}[H]
  \begin{center}
\caption{Random projection estimator with optimal random projections, $s=1$}\label{mcunsparse}

\begin{tabular}{|c||ccc|c|}
\hline
Design  & mean LB (s.d.)  & mean UB (s.d.)   &{\small 25th LB, 75th UB}   &  True id set \\\hline\hline
$d=100, k=10$   &  2.3459   (0.2417) & 2.3459  (0.2417) & [2.1777, 2.5076]   & [1.4237, 3.2144]   \\

$d=500, k=100$   &  2.2701 (0.2582)  &  2.3714 (0.2832) & [2.1306, 2.6018]  & [1.2352, 3.4343] \\

$d=1000, k=100$   & 2.4001 (0.2824) &  2.4001 (0.2824) & [2.2248, 2.6018]  &[1.1410, 3.4972]  \\ 

$d=5000, k=100$ & 2.3766  (0.3054) & 2.3766 (0.3054)  & [2.1306, 2.6018] & [1.2038, 3.5914]  \\
  
$d=5000, k=500$   & 2.2262 (0.3295) &   2.4906 (0.3439) &  [1.9892, 2.7667] & [1.2038, 3.5914]  \\

%
 
\hline
\end{tabular}

{\small Replicated 100 times using independently realized random projection matrices. The true value of $\theta$ is 2.3562. Right-most column reports the interval of points that minimized the unprojected criterion function.}\label{iidlarged}
  \end{center}
  \end{table}
   
  \begin{table}[H]
  \begin{center}
\caption{Random projection estimator with sparse random projections, $s = \sqrt{d}$}\label{mcsparse}

\begin{tabular}{|c||ccc|c|}
\hline
Design  & mean LB (s.d.)   & mean UB (s.d.)  & {\small 25th LB, 75th UB} & True id set  \\\hline\hline
$d=100, k=10$   &  2.3073  (0.2785) & 2.3073 (0.2785) & [2.1306, 2.5076] & [1.4237, 3.2144]   \\

$d=500, k=100$   &  2.2545   (0.2457) & 2.3473 (0.2415) & [2.0363, 2.5076]  & [1.2352, 3.4343]   \\

$d=1000, k=100$   & 2.3332 (0.2530) &  2.3398 (0.2574) & [2.1777, 2.5076]  & [1.1410, 3.4972]  \\ 

$d=5000, k=100$ & 2.3671  (0.3144) &  2.3671 (0.3144) & [2.1777, 2.5547]& [1.2038, 3.5914]  \\
  
$d=5000, k=500$   & 2.3228  (0.3353) &  2.5335 (0.3119) & [2.1306, 2.7667]& [1.2038, 3.5914]    \\

%
 
\hline
\end{tabular}

{\small Replicated 100 times using independently realized {\bf sparse} random projection matrices (where $s = \sqrt{d}$ in Definition \ref{sparserp}). The true value of $\theta$ is 2.3562. Right-most column reports the interval of points that minimized the unprojected criterion function.}\label{sparseiid}
  \end{center}
  \end{table}

 \FloatBarrier

\section{Empirical Application: a discrete-choice model incorporating both store and brand choices}

For our empirical application, we use supermarket scanner data made available by the Chicago-area Dominicks supermarket chain.\footnote{This
  dataset has previously been used in many papers in both economics
  and marketing; see a partial list at {\sffamily http://research.chicagobooth.edu/kilts/marketing-databases/dominicks/papers}.} Dominick's operated a chain of grocery stores across the Chicago
 area, and the database recorded sales information on many product categories, at the store and week level, at each Dominick's store. For this application, we look at the soft drinks category. 

For our choice model, we consider a model in which consumers choose both
the type of soft drink, as well as the store at which they make their
purchase.   Such a model of joint store and brand choice allows consumers
not only to change their brand choices, but also their store choices, in
response to across-time variation in economic conditions.  For instance, 
\cite{coibion2015cyclicality} is an analysis of supermarket scanner
data which suggests the importance of ``store-switching'' in dampening the
effects of inflation in posted store prices during recessions.   

Such a model of store and brand choice also highlights a key benefit of our
semiparametric approach.   A typical parametric model which would be used to model
store and brand choice would be a nested logit model, in which the
available brands and stores would belong to different tiers of nesting
structure.   However, one issue with the nested logit approach is that the
results may not be robust to different
assumptions on the nesting structure-- for instance, one researcher may 
nest brands below stores, while another researcher may be inclined to nest stores below brands.  These two alternative specifications would
differ in how the joint distribution of the utility shocks between brands
at different stores are modeled, leading to different parameter estimates. Typically, there are no {\em a priori} guides on the correct nesting structure to impose.\footnote{Because of this, \cite{hausman1984specification} have developed formal
econometric specification tests for the nested logit model.}

In this context, a benefit of our semiparametric is that we are {\em
  agnostic} as to the joint distribution of utility shocks; hence our
approach accommodates both models in which stores are in the upper nest and
brands in the lower nest, or vice versa, or any other model in which the
stores or brands could be divided into further sub-nests.   

\begin{figure}[!h]
\centering
\includegraphics[scale=0.4]{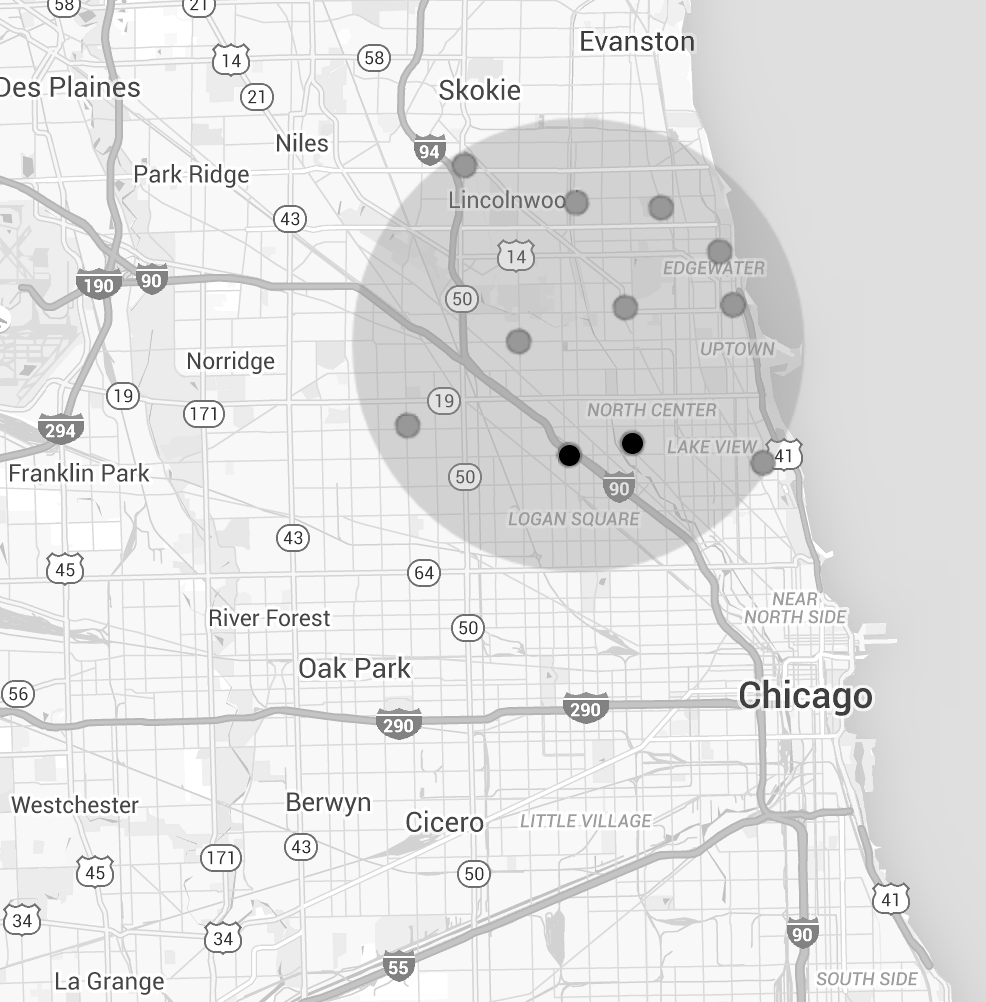}
\caption{Location of the 11 Dominick's branches as indicated by spots.}\label{11stores} 

{\footnotesize Radius of the circle is 4 miles. The darker spots are Dominick's medium-tier stores, the rest are high-tiers.}
\end{figure}

We have $n=15$ ``markets'', where each market corresponds to a distinct
two-weeks interval between October 3rd 1996 to April 30th 1997, which is the last recorded date.  
We include sales at eleven Dominicks supermarkets in north-central Chicago, as illustrated in Figure~\ref{11stores}.  Among these eleven supermarkets, most are classified as premium-tier stores, while two are medium-tier stores (distinguished by dark black spots in Figure~\ref{11stores}); stores in different tiers sell different ranges of products.

\begin{table}[!h]
\begin{center}
\small
\begin{tabular}{|c|C{7.5cm} |c|c|}
\hline
  & Definition  & Summary statistics  \\\hline\hline
 $price_{ij}$   & The average price of the store-upc $j$ at market $i$ & Mean: \$2.09, s.d: \$1.77    \\ \hline
 $bonus_{ij}$   & The fraction of weeks in market $i$ for which store-upc $j$ was on sale as a bonus or promotional purchase; for instance ``buy-one-get-one-half-off" deals & Mean: 0.27, s.d: 0.58  \\\hline
 $quantity_{ij}$&  total units of store-upc $j$ sold in market $i$ & Mean: 60.82, s.d: 188.37    \\\hline
 $holiday_{ij}$ & A dummy variable indicating the period spanning 11/14/96 to 12/25/96, which includes the Thanksgiving and Christmas holidays   &  6 weeks (3 markets)  \\\hline
$medium\_tier_{ij}$ & Medium, non-premium stores.${}^{a}$ & 2 out of 11 stores       \\\hline
$d$ & Number of store-upc & 3059\\\hline
\end{tabular}
\caption{Summary statistics}\label{summary}

{\footnotesize Number of observations is 45885=3059 upcs $\times$ 15 markets (2-week periods).}\label{summary}

{\footnotesize ${}^{a}$: Stores in the same tier share similar product selection, and also pricing to a certain extent.}
\end{center}
\end{table}

Our store and brand choice model consists of $d=3059$ choices, each corresponds to a unique store and universal product code (UPC) combination. We also define an outside option, for a total of $d=3060$ choices.\footnote{The outside option is constructed as follows: first we construct the market share $p_{ij}$ as $p_{ij} = quantity_{\.ij}/custcoun_{i}$, where $quantity_{ij}$ is the total units of store-upc $j$ sold in market $i$, and $custcoun_{i}$ is the total number of customers visiting the 11 stores and purchasing something at market $i$. The market share for market $i$'s outside option is then $1-\sum_{j=1}^{3093}p_{ij}$.}

%
 The summary statistics for our data sample are in Table \ref{summary}.

%

\begin{table}[!h]
\footnotesize
\begin{center}
\begin{tabular}{|c||c|c|c|c|}
\hline
 Specification & (A) & (B) & (C)   & (D) \\
 \hline
price & $-0.6982$ & $-0.9509$ &  $-0.7729$      & $ -0.4440$ \\               
 & $[-0.9420, -0.3131]$ & $[ -0.9869,  -0.7874]$ & $[ -0.9429,-0.4966]$  &$[-0.6821,-0.2445]$\\
bonus & &$0.0580$  &  $0.0461$    & $0.0336$  \\ 
& &$[-0.0116, 0.1949 ]$ & $[0.0054,0.1372]$ & $[0.0008, 0.0733]$  \\
 
price $\times$ bonus  & & $-0.1447$ &  $-0.0904$   & $ -0.0633 $\\
 & & $[-0.4843, 0.1123]$ & $[-0.3164 , 0.0521]$  & $[-0.1816,0.0375]$ \\
holiday &  0.0901  & &  $0.0661$   & 0.0238\\
 & $[-0.0080, 0.2175]$ & & $[-0.0288, 0.1378]$   & $[-0.0111, 0.0765]$\\
price $\times$ holiday &  $-0.6144$  & &$-0.3609$   &$-0.1183$\\
 & $[ -0.9013,  -0.1027]$ & & $[ -0.7048, -0.0139]$  &$[ -0.2368,  -0.0164]$ \\
  price $\times$ medium\_tier & & &   &$0.4815$\\ 
 & &  &   &$[-0.6978,  0.8067]$ \\\hline
   &  \multicolumn{4}{|c|}{$k=300$} \\
   &  \multicolumn{4}{|c|}{Cycles of length 2 \& 3} \\\hline
\end{tabular}
\caption{Random projection estimates, dimensionality reduction from $d=3059$ to $k=300$.}\label{p3}

{\footnotesize First row in each entry present the median coefficient, across 100 random projections.   Second row presents the 25-th and 75-th percentile among the 100 random projections. We use cycles of length 2 and 3 in computing the criterion function (Eq. \ref{Q}).}
\end{center}
\end{table}



Table \ref{p3} presents the estimation results.   As in the simulation results above, we ran 100 independent random projections, and thus obtained 100 sets of parameter estimates, for each model specification.   The results reported in Table \ref{p3} are therefore summary statistics of the estimates for each parameter. Since no location normalization is imposed for the error terms, we do not include constants in any of the specifications.  For estimation, we used cycles of length of length 2 and 3.\footnote{The result did not change in any noticeable when we vary the length of the cycles used in estimation.}

Across all specifications, the price coefficient is strongly negative. 
The {\em holiday} indicator has a positive (but small) coefficient, suggesting that, all else equal, the end-of-year holidays are a period of peak demand for soft drink products.\footnote{cf. \cite{chevalier2003don}.  Results are similar if we define the holiday period to extend into January, or to exclude Thanksgiving.}   In addition, the interaction between {\em price} and {\em holiday} is strongly negative across specifications, indicating that households are more price-sensitive during the holiday season.   For the magnitude of this effect, consider a soft drink product priced initially at \$1.00 with no promotion.  The median parameter estimates for Specification (C) suggest that during the holiday period, households' willingness-to-pay for this product falls as much as if the price for the product increases by \$0.27 during non-holiday periods.\footnote{$-0.77\alpha = 0.0661 -(0.77+0.36)\alpha(1+0.27)$, where $\alpha=-0.1161$ equals a scaling factor we used to scale the price data so that the price vector has the same length as the $bonus$ vector.  (The rescaling of data vectors is without loss of generality, and improves the performance of random projection by Eq. (\ref{varrp}).)}

We also obtain a positive sign on {\em bonus}, and the negative sign on the interaction {\em price $\times$ bonus} across all specifications, although their magnitudes are small, and there is more variability in these parameters across the different random projections.   We see that discounts seem to make consumers more price sensitive (ie. make the price coefficient more negative).  
 Since any price discounts will be captured in the {\em price} variable itself, the {\em bonus} coefficients capture additional effects that the availability of discounts has on behavior, beyond price.  Hence, the negative coefficient on the interaction {\em price $\times$ bonus} may be consistent with a bounded-rationality view of consumer behavior, whereby the availability of discount on a brand draws consumers' attention to its price, making them more aware of a product's exact price once they are aware that it is on sale.

In specification (D), we introduce the store-level covariate {\em medium-tier}, interacted with {\em price}.   However, the estimates of its coefficient are noisy, and vary widely across the 100 random projections.   This is not surprising, as {\em medium-tier} is a time-invariant variable and, apparently here, interacting it with price still does not result in enough variation for reliable estimation.

\section{Conclusion}
In this paper, we used of random projection -- an important tool for dimension-reduction from machine learning -- for estimating multinomial-choice models with large choice sets, a model which arises in many empirical applications.   Unlike many recent applications of machine learning in econometrics, dimension-reduction here is not required for selecting among high-dimensional covariates, but rather for reducing the inherent high-dimensionality of the model (ie. reducing the size of agents' choice sets).   

Our estimation procedure takes two steps.   First, the high-dimensional choice data are projected (embedded stochastically) into a lower-dimensional Euclidean space. This procedure is justified via results in machine learning, which shows that the pairwise distances between data points are preserved during data compression.  As we show, in practice the random projection can be very sparse, in the sense that only a small fraction (1\%) of the dataset is used in constructing the projection.   In the second step, estimation proceeds using the cyclic monotonicity inequalities implied by the multinomial choice model.   By using these inequalities for estimation, we avoid making explicit distributional assumptions regarding the random utility errors; hence, our estimator is semi-parametric.   The estimator works well in computational simulations and in an application to a real-world supermarket scanner dataset.

We are currently considering several extensions.   First, we are undertaking another empirical application in which consumers can choose among bundles of brands, which would thoroughly leverage the benefits of our random projection approach.   Second, another benefit of random projection is that it preserves privacy, in that the researcher no longer needs to handle the original dataset but rather a ``jumbled-up'' random version of it.\footnote{cf. \cite{Heffetz2014privacy}.}   We are currently exploring additional applications of random projection for econometric settings in which privacy may be an issue.

\newpage\small
\appendix

\section{Additional Tables and Figures}

  \begin{table}[H]
  \small
  \begin{center}
\begin{tabular}{|c||c|c|}
\hline
Design     & min LB, max UB  &  True id set \\\hline\hline
$d=100, k=10$   &  [1.8007, 3.3087]   & [1.4237, 3.2144]   \\

$d=500, k=100$   &   [1.7536, 2.9317]  & [1.2352, 3.4343] \\

$d=1000, k=100$   &  [1.6593, 2.9317]  &[1.1410, 3.4972]  \\ 

$d=5000, k=100$ &  [1.6593, 3.1202] & [1.2038, 3.5914]  \\
  
$d=5000, k=500$   &  [1.6593, 3.1202] & [1.2038, 3.5914]  \\
%
%
%
\hline
\end{tabular}
\caption{\small Random projection estimator with optimal random projections, $s=1$. Replicated 100 times using independently realized random projection matrices. The true value of $\theta$ is 2.3562. Identified set is the interval of points that minimized the unprojected criterion function.}\label{iidapp}
  \end{center}
  \end{table}

  \begin{table}[H]
  \small
  \begin{center}
\begin{tabular}{|c||c|c|}
\hline
Design  &  min LB, max UB & True id set  \\\hline\hline
$d=100, k=10$   &   [1.4237, 2.9788] & [1.4237, 3.2144]   \\

$d=500, k=100$   &   [1.7536, 2.9788]  & [1.2352, 3.4343]   \\

$d=1000, k=100$   &  [1.6122, 3.0259]  & [1.1410, 3.4972]  \\ 

$d=5000, k=100$ & [1.4237, 3.3558]& [1.2038, 3.5914]  \\
  
$d=5000, k=500$   & [1.6593, 3.0259]& [1.2038, 3.5914]    \\

%
\hline
\end{tabular}
\caption{\small Random projection estimator with sparse random projections, $s = \sqrt{d}$. Replicated 100 times using independently realized {\bf sparse} random projection matrices (where $s = \sqrt{d}$ in Definition \ref{sparserp}). The true value of $\theta$ is 2.3562. Identified set is the interval of points that minimized the unprojected criterion function.}
\label{sparseapp}
  \end{center}
  \end{table}

\section{Equivalence of alternative representation of cyclic monotonicity}
Here we show the equivalence of Eqs. (\ref{cm}) and (\ref{cm_2}), as two alternative statements of the cyclic monotonicity inequalities.   We begin with the second statement (\ref{cm_2}).  We have
$$\sum_{l=2}^{L+1}\|\bm{u}^l-\bm{p}^{l} \|^2=\sum_{l=2}^{L+1} \sum_{j=1}^d \left(u_{j}^{l}-p_j^{l}\right)^2 = \sum_{l=2}^{L+1}\left[ \sum_{j=1}^{d} (u_j^{l})^2 + \sum_{j=1}^{d} (p_j^{l})^2 -2\sum_{j=1}^{d} u_j^{l} p_j^{l}\right].$$
Similarly
$$\sum_{l=2}^{L+1}\|\bm{u}^{l}-\bm{p}^{l-1}\|^2=\sum_{l=2}^{L+1} \sum_{j=1}^d \left(u_{j}^{l}-p_{j}^{l-1}\right)^2 = \sum_{l=2}^{L+1}\left[ \sum_{j=1}^{d} (u_j^{l})^2 + \sum_{j=1}^{d} (p_j^{l-1})^2 -2\sum_{j=1}^{d} u_j^{l} p_{j}^{l-1}\right].$$
In the previous two displayed equations, the first two terms cancel out.   By shifting the $l$ indices forward we have:
$$\sum_{l=2}^{L+1} \sum_{j=1}^{d} u_j^{l} p_{j}^{l-1} = \sum_{l=1}^{L}\sum_{j=1}^{d} u_{j}^{l+1}p_j^{l}.$$
Moreover, by definition of a cycle that  $u_j^{L+1} = u_{j}^{1}$, $p_j^{L+1} = p_{j}^{1}$, we then have:
$$\sum_{l=2}^{L+1}\sum_{j=1}^{d} u_j^{l} p_j^{l} = \sum_{l=1}^{L}\sum_{j=1}^{d} u_j^{l} p_j^{l}$$ 

Hence $$\sum_{l=2}^{L+1} \left(\|\bm{u}^{l}-\bm{p}^{l}\|^2-\|\bm{u}^{l}-\bm{p}^{l-1}\|^2 \right)= 2\sum_{l=1}^{L} \sum_{j=1}^{d} \left(u_{j}^{l}p_j^{l-1} - u_j^l p_j^l\right)= 
2\sum_{l=1}^{L} (\bm{u}^{l+1} - \bm{u}^l)\cdot \bm{p}^l$$

Therefore, cyclic monotonicity of Eq. (\ref{cm}) is satisfied if and only if this formulation of cyclic monotonicity in terms of Euclidean norms is satisfied.
\hfill

\qed

\section{Proof of Theorems in Section \ref{convsection}}

We first introduce two auxiliary lemmas.

\begin{lemma}[Convexity Lemma, \cite{pollard1991asymptotics}]
Suppose $A_n(s)$ is a sequence of convex random functions defined on an open convex set $S$ in $\mathbb{R}^{d}$, which converges in probability to some $A(s)$, for each $s$. Then $\sup_{s \in K} |A_{n}(s) - A(s)|$ goes to zero in probability, for each compact subset $K$ of $S$.
\end{lemma}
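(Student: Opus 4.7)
The plan is to follow Pollard's classical argument, which promotes pointwise-in-probability convergence to uniform-in-probability convergence on compacta by exploiting the intrinsic Lipschitz regularity of convex functions. The probabilistic content reduces to a finite union bound; the real work is a deterministic convex-analysis estimate. As a preliminary, I would first check that the limit $A$ is itself convex: for any $s_1, s_2 \in S$ and $\lambda \in [0,1]$, pass to a subsequence along which $A_n$ converges almost surely at each of $s_1$, $s_2$, and $\lambda s_1 + (1-\lambda)s_2$, and let the convexity inequality for $A_n$ pass to the a.s.\ limit. Convexity on the open convex set $S$ then yields continuity of $A$, in particular boundedness and Lipschitz regularity on any compact subset.

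Next I would set up the geometry needed to obtain a uniform-in-$n$ Lipschitz bound for $A_n$ on a given compact $K \subset S$. Enclose $K$ in a compact convex polytope $K'$ with $K \subset \operatorname{int}(K') \subset K' \subset S$ and finite vertex set $V = \{v_1,\dots,v_m\}$, and fix an interior reference point $s_0 \in \operatorname{int}(K')$. Convexity gives $\sup_{K'} A_n \le \max_{v \in V} A_n(v)$ directly; a ray-extension argument through $s_0$ --- writing $s_0$ as a convex combination of any $s \in K'$ and a boundary point, with the coefficient bounded below by some $\alpha_0 = \alpha_0(K', s_0) > 0$ --- produces a matching lower bound for $\inf_{K'} A_n$ in terms of the same $m+1$ pointwise values. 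The classical Rockafellar estimate (Theorem~10.4 in \cite{rockafellar1970convex}) then bounds the Lipschitz constant of $A_n|_K$ by $C \cdot \sup_{K'}|A_n|$ for a geometric constant $C = C(K, K')$. Since pointwise convergence in probability holds at the finitely many points $\{v_1,\dots,v_m, s_0\}$, a union bound makes $\sup_{K'}|A_n|$ bounded in probability by a deterministic constant, and hence the Lipschitz constant $L_n$ of $A_n|_K$ satisfies $L_n \le L$ for a deterministic $L$ with probability tending to $1$.

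Finally, I would discretize and conclude. For $\varepsilon > 0$ pick an $\eta$-net $\{s_1,\dots,s_N\} \subset K$ with $\eta = \varepsilon/(2(L + L_*))$, where $L_* = \operatorname{Lip}(A|_K)$. On the intersection of the high-probability events $\{L_n \le L\}$ and $\{\max_{j \le N}|A_n(s_j) - A(s_j)| < \varepsilon\}$ --- itself high-probability by a finite union bound over $N + m + 1$ pointwise convergence events --- for any $s \in K$ with nearest net point $s_j$,
\begin{equation*}
|A_n(s) - A(s)| \le L_n|s - s_j| + |A_n(s_j) - A(s_j)| + L_*|s - s_j| < 2\varepsilon.
\end{equation*}
Taking the supremum over $s \in K$ and letting $\varepsilon \to 0$ delivers the conclusion. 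The only subtle step is the probabilistic Lipschitz bound in the middle paragraph; the convex-hull trick with finitely many vertices plus one interior point is what lets the argument reduce to finite union bounds.
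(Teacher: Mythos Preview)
The paper does not prove this lemma; it is stated in Appendix~C as an auxiliary result and attributed to \cite{pollard1991asymptotics} without any argument, then invoked as a black box in the proof of Theorem~\ref{ucon}. Your proposal is a correct and faithful reconstruction of Pollard's original proof: the polytope-enclosure with finitely many vertices plus one interior reference point to control $\sup_{K'}|A_n|$ via a finite union bound, the Rockafellar Lipschitz estimate (Theorem~10.4) to transfer that control to a uniform Lipschitz constant on $K$, and the final $\eta$-net argument are exactly the standard ingredients. There is nothing in the paper's own text to compare against.
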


\begin{lemma}
The criterion function $Q(\bm{\beta},\mathcal{D})$ is convex in $\bm{\beta} \in \mathbb{B}$ for any given dataset $\mathcal{D}$, where $\mathbb{B}$ is an open convex subset of $\mathbb{R}^{b}$.\label{convexQ}
\end{lemma}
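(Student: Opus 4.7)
The plan is to exhibit $Q(\bm\beta,\mathcal{D})$ as a finite sum of compositions of a scalar convex function with affine maps of $\bm\beta$, and then invoke the fact that convexity is preserved under affine precomposition and nonnegative summation. There is no real obstacle: the nontrivial step is just verifying that the scalar function $x\mapsto [x]_+^2$ is convex, which is a standard but worth-stating fact.

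First I would fix an arbitrary cycle $(a_1,\dots,a_L,a_1)$ appearing in the outer summation and examine the bracketed expression. Since $\bm{X}^{(a_l)}\bm\beta$ is linear in $\bm\beta$ for each $l$, and $\bm p^{(a_l)}$ does not depend on $\bm\beta$, the inner sum
\begin{equation*}
T_{(a_l)}(\bm\beta)\;\equiv\;\sum_{l=1}^{L}\bigl(\bm{X}^{(a_{l+1})}\bm\beta-\bm{X}^{(a_{l})}\bm\beta\bigr)\cdot \bm p^{(a_l)} \;=\;\Bigl(\sum_{l=1}^{L}\bigl(\bm{X}^{(a_{l+1})}-\bm{X}^{(a_{l})}\bigr)^{\!\top}\bm p^{(a_l)}\Bigr)^{\!\top}\bm\beta
\end{equation*}
is an affine (in fact linear) function of $\bm\beta\in\mathbb{B}$. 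So each term of $Q$ has the form $\varphi\bigl(T_{(a_l)}(\bm\beta)\bigr)$ with $\varphi(x)=[x]_+^2$, composed with an affine map of $\bm\beta$.

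Next I would show that $\varphi(x)=[x]_+^2$ is convex on $\mathbb{R}$. One clean route is to write $\varphi(x)=\tfrac14(x+|x|)^2$, differentiate to obtain $\varphi'(x)=2\max\{x,0\}$, and observe that $\varphi'$ is nondecreasing, hence $\varphi$ is convex. Alternatively, note that $[x]_+$ is convex and nonnegative, and $y\mapsto y^2$ is convex and nondecreasing on $[0,\infty)$; by the standard composition rule for convex functions, $\varphi=(\cdot)^2\circ[\cdot]_+$ is convex on $\mathbb{R}$. Either justification suffices.

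Combining the two ingredients, for each cycle the map $\bm\beta\mapsto \varphi(T_{(a_l)}(\bm\beta))$ is the composition of a convex function on $\mathbb{R}$ with an affine map from $\mathbb{R}^b$ to $\mathbb{R}$, hence convex in $\bm\beta$ on the convex set $\mathbb{B}\subseteq\mathbb{R}^b$. Finally, $Q(\bm\beta,\mathcal{D})$ is a finite sum (over all cycles in $\mathcal{D}$ of length $L\geq 2$) of such convex functions, and a nonnegatively weighted sum of convex functions is convex. This establishes convexity of $Q(\cdot,\mathcal{D})$ on $\mathbb{B}$ and completes the proof.
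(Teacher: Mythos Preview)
Your proof is correct. Both you and the paper ultimately rely on the convexity of the scalar map $x\mapsto [x]_+^2$, but you reach the conclusion by a cleaner structural route: you identify the inner bracketed expression as an \emph{affine} (indeed linear) functional of $\bm\beta$, then invoke the standard facts that a convex scalar function composed with an affine map is convex, and that a finite sum of convex functions is convex. The paper instead verifies the definition of convexity directly, expanding $Q(\lambda\bm\beta+(1-\lambda)\bm\beta')$ and using the subadditivity inequality $[\lambda x+(1-\lambda)y]_+\leq \lambda[x]_++(1-\lambda)[y]_+$ followed by convexity of $t\mapsto t^2$. Your approach buys brevity and makes transparent \emph{why} the result holds (the criterion is a sum of convex-of-affine terms), while the paper's direct verification is self-contained and does not appeal to composition rules. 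Both are entirely valid.
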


\begin{proof}
We want to show that $Q(\lambda\bm{\beta}+(1-\lambda)\bm{\beta}') \leq \lambda Q(\bm{\beta})+(1-\lambda)Q(\bm{\beta}')$, where $\lambda \in [0,1]$, and we suppress the dependence of $Q$ on the data $\mathcal{D}$. 

\begin{align}
&Q(\lambda\bm{\beta} +(1-\lambda)\bm{\beta}' ) \notag \\
=& \sum_{\text{all cycles in data } \mathcal{D}}\Bigg[\sum_{l=1}^{L} \left( \bm{X}^{(a_{l+1})} - \bm{X}^{(a_{l})}\right)\left(\lambda \bm{\beta} + (1-\lambda)\bm{\beta}' \right) \cdot {\bm p}^{(a_{l})}\Bigg]^2_{+} \notag\\
=&\sum_{\text{all cycles in data } \mathcal{D}}\Bigg[\lambda\sum_{l=1}^{L} \left( \bm{X}^{(a_{l+1})} - \bm{X}^{(a_{l})}\right)\bm{\beta}  \cdot {\bm p}^{(a_{l})}+ (1-\lambda)\sum_{l=1}^{L} \left( \bm{X}^{(a_{l+1})} - \bm{X}^{(a_{l})}\right)\bm{\beta}'\cdot {\bm p}^{(a_{l})}\Bigg]^2_{+} \notag\\
\leq&\sum_{\text{all cycles in data } \mathcal{D}}\Bigg\{\lambda\Bigg[\sum_{l=1}^{L} \left( \bm{X}^{(a_{l+1})} - \bm{X}^{(a_{l})}\right)\bm{\beta}  \cdot {\bm p}^{(a_{l})}\Bigg]_{+}+ (1-\lambda)\Bigg[\sum_{l=1}^{L} \left( \bm{X}^{(a_{l+1})} - \bm{X}^{(a_{l})}\right)\bm{\beta}'\cdot {\bm p}^{(a_{l})}\Bigg]_{+}\Bigg\}^2 \label{ineqmax1}\\
\leq& \lambda\sum_{\text{all cycles in data } \mathcal{D}}\Bigg[\sum_{l=1}^{L} \left( \bm{X}^{(a_{l+1})} - \bm{X}^{(a_{l})}\right)\bm{\beta}  \cdot {\bm p}^{(a_{l})}\Bigg]^2_{+} + \notag\\
&\quad (1-\lambda)\sum_{\text{all cycles in data } \mathcal{D}}\Bigg[\sum_{l=1}^{L} \left( \bm{X}^{(a_{l+1})} - \bm{X}^{(a_{l})}\right)\bm{\beta}'\cdot {\bm p}^{(a_{l})}\Bigg]^2_{+} \label{ineqmax2}\\
=&\lambda Q(\bm{\beta})+(1-\lambda)Q(\bm{\beta}')\notag
\end{align} 

Inequality \ref{ineqmax1} above is due to the fact that $\max\{x,0\} + \max\{y,0\} \geq \max\{x+y,0\}$ for all $x,y \in \mathbb{R}$. Inequality \ref{ineqmax2} holds from the convexity of the function $f(x)=x^2$.
\end{proof}

{\bfseries Proof of Theorem \ref{ucon}:} Recall from Eq. (\ref{varrp})  that for any two vectors ${\bm u}, {\bm v} \in \mathbb{R}^{d}$, and for the class of $k$-by-$d$ random projection matrices, $R$, considered in Definition \ref{sparserp}, we have:

\begin{align}
\E(\|R \bm{u}-R\bm{v}\|^{2}) = \|\bm{u}-\bm{v}\|^{2} \\
\Var(\|R \bm{u}-R\bm{v}\|^{2}) = O\left(\frac{1}{k}\right)
\end{align}

Therefore by  Chebyshev's inequality, $\|R \bm{u}-R\bm{v}\|^{2}$ converges in probability to $\|\bm{u}-\bm{v}\|^{2}$ as $k$ grows large. It follows that for any given $\bm{X}$, $\bm{\beta}$ and $\bm{p}$, we have $\|\tilde{\bm{X}}\bm{\beta} - \tilde{\bm{p}}\|^{2} \rightarrow_{p} \|\bm{X}\bm{\beta} - \bm{p}\|^{2}$, where $\tilde{\bm{X}}=R\bm{X}$ and  $\tilde{\bm{p}}=R\bm{p}$ are the projected versions of $\bm{X}$ and  $\bm{p}$. Applying the Continuous Mapping Theorem to the criterion function in Eq. \ref{QEuclidean}, we obtain that $Q(\bm{\beta},\tilde{\mathcal{D}}_{k})$ converges in probability to $Q(\bm{\beta},\mathcal{D})$ pointwise for every $\bm{\beta}$ as $k$ grows large.

By Lemma \ref{convexQ}, the criterion function $Q(\bm{\beta},\mathcal{D})$ is convex in $\bm{\beta} \in \mathbb{B}$ for any given data $\mathcal{D}$, where $\mathbb{B}$ is an open convex subset of $\mathbb{R}^{b}$. Therefore, we can immediately invoke the Convexity Lemma to show that pointwise convergence of the $Q$ function implies that $Q(\bm{\beta},\tilde{\mathcal{D}}_{k})$ converges uniformly to $Q(\bm{\beta},\mathcal{D})$.
\hfill$\square$

{\bfseries Proof of Theorem \ref{convergence}:}
The result follows readily from Assumption \ref{exist} and Theorem \ref{ucon}, by invoking \cite{chernozhukov2007estimation}. The key is in recognizing that (i) in our random finite-sampled criterion function, the randomness stems from the $k$-by-$d$ random projection matrix, (ii) the deterministic limiting criterion function here is defined to be the criterion function computed without random projection, taking the full dataset as given. We can then strengthen the notion of half-Hausdorff convergence to full Hausdorff convergence following the augmented set estimator as in \cite{chernozhukov2007estimation}.
\hfill$\square$

\section{Additional convergence result}\label{add}

\begin{assumption}\label{dgrows}
Suppose that as the dimensionality of the choice set, $d$, grows, the (deterministic) sequence of data $\mathcal{D}_{d}=\{(\bm{X}^{(1)},\bm{p}^{(1)}),$ $\dots,(\bm{X}^{(n)},\bm{p}^{(n)})\}$ satisfies the following two assumptions.
(i) Let $\bm{X}$ be any vector of covariates in $\mathcal{D}_{d}$, then $\frac{1}{d}\sum_{j=1}^{d}(X_{j})^{4}$ exist and is bounded as $d$ grows. Secondly, without loss of generality, assume that for all vectors of covariates $\bm{X}$ in the data $\mathcal{D}_{d}$, $\sum_{j=1}^{d} X_{j}^{2}=\|\bm{X}\|^{2} = O(1)$ as $d$ grows. This part is without loss of generality as the cardinality of utilities can be rescaled. 
\end{assumption}
%
As before, the only source of randomness is in the random projection. Accordingly, the sequence of data $\mathcal{D}_{d}$ as $d$ grows is deterministic.

\begin{theorem}
Suppose that as $d$ grows, the sequence of data $\mathcal{D}_{d}$  satisfies Assumptions \ref{exist} and \ref{dgrows}. Let $R$ be a $k \times d$ sparse random projection matrix with $d = O(k^{2})$, and denote $\tilde{\mathcal{D}}_{k}=R \mathcal{D}_{d}$ as the compressed data. The random projection estimator $\tilde{\Theta}_{k} =\argmin_{{\bm \beta} \in \mathbb{B}} Q({\bm \beta},\tilde{\mathcal{D}}_{k})$ converges in half-Hausdorff distance to the identified set $\Theta^{*}$ as $k$ grows, i.e. $\sup_{{\bm \beta} \in \tilde{\Theta}_{k}} \inf_{{\bm \beta}' \in \Theta^{*}} \|{\bm \beta}  - {\bm \beta}'\|  \xrightarrow[]{p} 0$ as $k$ grows.
\end{theorem}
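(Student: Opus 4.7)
The strategy follows the three-step template of Theorem \ref{convergence}, but with the projected vectors now embedded in a dimension $d$ that itself grows with $k$. First, I would establish pointwise-in-$\bm{\beta}$ convergence $Q(\bm{\beta}, \tilde{\mathcal{D}}_k) \xrightarrow{p} Q(\bm{\beta}, \mathcal{D}_d)$ by applying Chebyshev's inequality to the variance formula (\ref{varrp}). Second, I would upgrade to uniform convergence on the convex parameter space $\mathbb{B}$ via Pollard's Convexity Lemma together with Lemma \ref{convexQ}, whose proof is dimension-free. Third, I would invoke the argmin-set convergence argument of \cite{chernozhukov2007estimation} under Assumption \ref{exist} to pass from uniform convergence of the criterion to half-Hausdorff convergence of $\tilde{\Theta}_k$ to $\Theta^*$.

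The only step that feels the growth of $d$ is the first. For each pair of $d$-dimensional vectors $\bm{u}, \bm{v}$ arising in the Euclidean reformulation (\ref{QEuclidean}) of the criterion---either as $\bm{X}^{(a)}\bm{\beta}$ or as $\bm{p}^{(a)}$---the sparse projection still satisfies $\mathbb{E}\|R\bm{u}-R\bm{v}\|^{2}= \|\bm{u}-\bm{v}\|^{2}$, so the task reduces to verifying
$$\mathrm{Var}\bigl(\|R\bm{u}-R\bm{v}\|^{2}\bigr) \;=\; \frac{1}{k}\Bigl(2\|\bm{u}-\bm{v}\|^{4} + (s-3)\sum_{j=1}^{d}(u_j-v_j)^{4}\Bigr) \;\to\; 0.$$
The normalization $\|\bm{X}\|^{2}=O(1)$ in Assumption \ref{dgrows} controls the first summand at rate $O(1/k)$ uniformly in $d$. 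For the second summand, I would propagate the fourth-moment hypothesis $\tfrac{1}{d}\sum_j X_{jl}^{4}=O(1)$ to $\bm{u}=\bm{X}^{(a)}\bm{\beta}$ through the power-mean bound $(\sum_l X_{jl}\beta_l)^{4} \leq b^{3}\sum_l X_{jl}^{4}\beta_l^{4}$, giving $\sum_j u_j^{4}=O(d)$; the analogous control for $\bm{v}=\bm{p}^{(a)}$ is immediate from $\|\bm{p}\|_\infty \leq 1$ and $\|\bm{p}\|_1=1$. Combining with $(a-b)^{4}\leq 8(a^{4}+b^{4})$ yields $\sum_j (u_j-v_j)^{4}=O(d)$, so the full variance bound is $O(1/k)+O(sd/k)$. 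Under $d=O(k^{2})$ this vanishes provided $s$ is held bounded---in particular, the simulations' choice $s\leq 3$ makes the second summand non-positive outright---after which Chebyshev and the continuous mapping theorem applied to (\ref{QEuclidean}) deliver pointwise convergence of $Q$.

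The main obstacle is precisely the interplay between the sparsity parameter $s$ and the growth rate of $d$: the coarse $O(d)$ bound on $\sum_j (u_j-v_j)^{4}$ is what dictates the $d=O(k^{2})$ calibration, and for the very sparse prescription $s=\sqrt{d}$ one would need either a sharper moment inequality---exploiting, for instance, $\|\bm{X}\|_\infty \leq \|\bm{X}\|_2=O(1)$ to strengthen the bound on $\sum_j (u_j-v_j)^{4}$---or a tail refinement beyond Chebyshev. Once the variance bound is closed, the remaining steps are essentially dimension-free: the convexity argument of Lemma \ref{convexQ} and the \cite{chernozhukov2007estimation} argmin machinery transport verbatim into the growing-$d$ regime, so no additional work is needed beyond the pointwise convergence step.
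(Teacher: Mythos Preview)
Your three-step architecture---pointwise convergence of $Q$ via Chebyshev on the variance formula~(\ref{varrp}), then uniform convergence via Lemma~\ref{convexQ} and Pollard's Convexity Lemma, then half-Hausdorff convergence via \cite{chernozhukov2007estimation}---is exactly the paper's route, and your observation that only the first step feels the growth of $d$ is correct.

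The gap is in how you bound the variance. First, there is an arithmetic slip: with your crude bound $\sum_j(u_j-v_j)^4=O(d)$, the second summand contributes $O\bigl((s-3)d/k\bigr)$, and under $d=O(k^2)$ this is $O(sk)$, which does \emph{not} vanish for bounded $s>3$. Your statement ``vanishes provided $s$ is held bounded'' is therefore only correct in the special case $s\le 3$, where the sign of $(s-3)$ makes the second term non-positive. More importantly, the theorem as stated concerns the \emph{sparse} projection $s=\sqrt{d}$, and for that choice your bound gives $O(k^2)$, so the argument as written does not cover the claim.

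The paper handles the $s=\sqrt{d}$ case differently: rather than bounding the two summands of~(\ref{varrp}) separately, it invokes the result of \cite{li2006very} that, under the fourth-moment condition in Assumption~\ref{dgrows}(i) (namely, $\tfrac{1}{d}\sum_j X_j^4$ bounded), the first term $2\|\bm{u}-\bm{v}\|^4$ \emph{dominates} the second term $(s-3)\sum_j(u_j-v_j)^4$ for large $d$, even with $s=\sqrt{d}$. The normalization in Assumption~\ref{dgrows}(ii) is then used only to bound the surviving first term as $\|\bm{u}-\bm{v}\|^4=O(1)$, yielding $\mathrm{Var}=O(1/k)=O(1/\sqrt{d})\to 0$. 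So the role of the two halves of Assumption~\ref{dgrows} is split: part~(i) justifies the Li domination, part~(ii) controls the dominant term. Your proposal uses part~(i) only to obtain the coarse $O(d)$ bound and never invokes the domination step, which is why you are left with an obstacle at $s=\sqrt{d}$ that the paper does not face. The sharper moment inequality you gesture at in your final paragraph (via $\|\bm{X}\|_\infty\le\|\bm{X}\|_2$) is in the right direction but, by itself, still leaves a residual $O(s/k)=O(1)$ term; the Li domination argument is what the paper actually relies on to close the gap.
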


\begin{proof}
Let $\bm{u}^{(i)}\equiv \bm{X}^{(i)}\bm{\beta}$ be the $d$-dimensional vector of utilities that market $i$ derives from each of the $d$ choices (before realization of shocks), and let $\bm{p}^{(i)}$ be the corresponding observed choice probabilities for market $i$ in the data $\mathcal{D}_d$. For any $\bm{\beta}$, and any pair of markets $(a,b) \in \{1,\dots,n\}^{2}$, we have from Equation \ref{varrp}:

\begin{align}
\Var(\|\tilde{\bm{u}}^{(a)}-\tilde{\bm{p}}^{(b)}\|^{2}) = \frac{1}{\sqrt{d}} \Big(2\|\bm{u}^{(a)}-\bm{p}^{(b)}\|^{4}+(s-3)\sum_{j=1}^{d}(u^{(a)}_{j} - p^{(b)}_{j})^{4} \Big)\label{varproof}
\end{align}

where $(\tilde{\bm{u}}^{(a)},\tilde{\bm{p}}^{(b)})$ denotes the projected-down vector $(R\bm{u}^{(a)}, R \bm{p}^{(b)})$, and $R$ is a $k \times d$ sparse random projection matrix (i.e. $s = \sqrt{d}$). Following \cite{li2006very}, if the limit of $\frac{1}{d}\sum_{j=1}^{d}(u_{j}^{(a)})^{4}$ exists and is bounded as $d$ grows, then the first term in Equation \ref{varproof} dominates the second term. Now, we have $\frac{1}{d}\sum_{j=1}^{d}(u_{j}^{(a)})^{4}  = \frac{1}{d}\sum_{j=1}^{d}\left(\sum_{t=1}^{p} \beta_{t} X^{(a)}_{j,t} \right)^{4}$, where $|\beta_{t}|< 1$ per the normalization of parameters: $\|\bm{\beta}\| = 1$. A sufficient condition for $\frac{1}{d}\sum_{j=1}^{d}\left(\sum_{t=1}^{p} \beta_{t} X^{(a)}_{j,t} \right)^{4}$ to exist in the limit is that $\frac{1}{d}\sum_{j=1}^{d} (X^{(a)}_{j,t})^{4}$ exists for all $t$, as stipulated in Assumption (2). (By the Jensen's inequality, if the fourth moment exists, then all lower moments exist, and by the Cauchy--Schwarz inequality, if $\E[X^{4}]$ and $\E[Y^{4}]$ exist, then $\E[X^{2}Y^{2}]$, $\E[XY^{2}]$ and so on also exist.)

Having established that the first term in Equation \ref{varproof} dominates, we now examine the first term. If $\|\bm{u}^{(a)}\| = O(1)$, then  $\|\bm{u}^{(a)}-\bm{p}^{(b)}\|^{4} = O(1)$ since $\bm{p}^{(b)}$ is a vector of choice probabilities and $\|\bm{p}^{(b)}\|$ is bounded for all $d$. A sufficient condition for  $\|\bm{u}^{(a)}\| \equiv \|\bm{X}^{(a)}\bm{\beta}\| = O(1)$ is that for all columns $\bm{X}$ of $\bm{X}^{(a)}$, we have $\|\bm{X}\|^{2} = O(1)$ as $d$ grows. This is maintained by Assumption \ref{dgrows}. Therefore from Equation \ref{varproof}, we have $\Var(\|\tilde{\bm{u}}^{(a)}-\tilde{\bm{p}}^{(b)}\|^{2})  = O\left(\frac{1}{\sqrt{d}}\right)$ as $d$ grows. Hence the criterion function $Q(\bm{\beta},\tilde{\mathcal{D}}_{k})$ converges pointwise to $Q(\bm{\beta},\mathcal{D})$ as $d$ grows. The rest of the proof follows from  Theorems \ref{ucon} and \ref{convergence}. 

\end{proof}

\newpage\small
\bibliographystyle{apalike}
\bibliography{rpmc}

\end{document}